\documentclass[10pt]{article} %
\usepackage[preprint]{tmlr}

\usepackage{hyperref}
\usepackage{url}

\title{Maximum Mean Discrepancy on Exponential Windows for Online Change Detection}

\author{\name Florian Kalinke \email florian.kalinke@kit.edu\\
      \addr Karlsruhe Institute of Technology, Germany
      \AND
      \name Marco Heyden \email marco.heyden@kit.edu\\
      \addr Karlsruhe Institute of Technology, Germany
      \AND
      \name Georg Gntuni \email g.gntuni@gmail.com\\
      \addr Karlsruhe Institute of Technology, Germany
      \AND
      \name Edouard Fouché \email edouard.fouche@kit.edu\\
      \addr Karlsruhe Institute of Technology, Germany
      \AND
      \name Klemens Böhm \email klemens.boehm@kit.edu\\
      \addr Karlsruhe Institute of Technology, Germany}

\usepackage{booktabs}
\usepackage{amsfonts,amsthm}
\usepackage{bm}
\usepackage{mathtools}
\usepackage[noend]{algpseudocode}
\mathtoolsset{showonlyrefs}
\usepackage{url}
\usepackage{enumitem}

\newcommand{\R}{\mathbb{R}}    %
\newcommand{\X}{\mathcal{X}}   %
\newcommand{\Q}{\mathbb{Q}}    %
\renewcommand{\P}{\mathbb{P}}  %
\renewcommand{\O}{\mathcal{O}} %
\newcommand{\E}{\mathbb{E}}    %
\renewcommand{\H}{\mathcal{H}} %
\newcommand{\tb}{\textbf}      %

\DeclareMathOperator{\MMD}{MMD}

\newtheorem{example}{Example}
\newtheorem{proposition}{Proposition}
\newtheorem{lemma}{Lemma}
\newtheorem{theorem}{Theorem}
\newtheorem{remark}{Remark}

\algrenewcommand\algorithmicrequire{\textbf{Input:}}
\algrenewcommand\algorithmicensure{\textbf{Output:}}

\setenumerate{labelindent=0em,leftmargin=1.6em,topsep=0cm,partopsep=0cm,parsep=0cm,itemsep=2mm}
\setitemize{labelindent=0em,leftmargin=1.3em,topsep=0cm,partopsep=0cm,parsep=0cm,itemsep=2mm}

\begin{document}

\newcommand{\mysize}{.95}

\newcounter{tmp} %

\maketitle

\begin{abstract}
	Detecting changes is of fundamental importance when analyzing data streams
	and has many applications, e.g., in predictive maintenance, fraud detection, or
	medicine. A principled approach to detect changes is to compare the
	distributions of observations within the stream to each other via hypothesis
	testing. Maximum mean discrepancy (MMD),
	a (semi-)metric on the space of probability distributions,
	provides powerful non-parametric two-sample tests on kernel-enriched domains. In particular, MMD is able to detect any
	disparity between distributions under mild conditions. However, classical MMD estimators suffer from a quadratic runtime complexity, which renders their direct
	use for change detection in data streams impractical. In this article, we propose a new change detection algorithm, called Maximum Mean Discrepancy on Exponential Windows
	(MMDEW), that combines the benefits of MMD with an efficient computation based on exponential windows. We prove that MMDEW enjoys polylogarithmic
	runtime and logarithmic memory complexity
	and show empirically that it outperforms the state of the art on benchmark data streams.
\end{abstract}

\section{Introduction}
Data streams are possibly infinite sequences of observations that arrive over time. They can have different sources: sensors in industrial settings, online transactions from financial institutions, click monitoring on websites, online feeds, etc.
Quickly detecting when a change takes place can yield useful insights, for example, about machine failure, malicious financial transactions, changes in customer preferences, and public opinions.

A \textit{change} occurs if the underlying distribution of the data stream changes at a certain point in time.
We call this moment \textit{change point}~\citep{gama2010knowledgediscovery}; it is sometimes also referred to as \textit{concept drift}.
A principled and widely-used approach to detect changes is to use two-sample tests. The null hypothesis of such tests is that the data before and after the potential change point follow the same distribution.
If the test rejects the hypothesis, one assumes that a change occurred.

One way to construct these tests is to use the kernel-based maximum mean discrepancy (MMD; \citealt{smola07hilbert,gretton2012kerneltwosample}),
which one can interpret as a (semi-)metric on the space of probability distributions.\footnote{A function is a semimetric if it is a metric but can be zero for distinct elements.}
In the statistics literature, MMD is also known as energy distance \citep{szekely04testing,szekely05new}; see \citet{sejdinovic13equivalence} for the equivalence.
MMD relies on the kernel mean embedding \citep[Ch.~4]{berlinet04reproducing}; it uses a kernel function to map a probability distribution to a reproducing kernel Hilbert space (RKHS; \citealt{aronszajn50theory}) and quantifies the discrepancy of the two distributions as their distance in the RKHS. MMD is a metric if the kernel mean embedding is injective; the kernel is then called characteristic \citep{fukumizu08kernel,sriperumbudur10hilbert}.
When using a characteristic kernel, the MMD two-sample test allows to distinguish any distributions given that their kernel mean embeddings exist, which is guaranteed under mild conditions.

Two-sample tests based on MMD are widely applicable, as there exist kernel
functions for a multitude of Euclidean and non-Euclidean domains, for example, strings
\citep{watkins99dynamic,cuturi05contextfree}, graphs
\citep{gartner03graph,borgwardt20graph}, or time series
\citep{cuturi11fast,kiraly19kernel}. Another benefit
of kernel-based two-sample tests is their high power. While, for Euclidean data,
it has been shown that the power of such tests generally decreases in the
high-dimensional setting \citep{ramdas2015powerhypothesistest}, recent results
\citep{cheng24kernel} establish that the power rather depends on
the intrinsic dimensionality of the data. The intrinsic dimensionality is
typically low in real-world settings so that the kernel-based two-sample tests
there do not suffer the curse of dimensionality.

Despite these benefits, a well-known bottleneck of MMD-based
approaches is their computational complexity. When
comparing the distributions of two sets of data of sizes $m$ and $n$,
respectively, the computation of MMD with classical estimators is in $\O\left(m^2 + n^2\right)$, with a
memory complexity in $\O\left(m+n\right)$. Naively computing MMD for each
possible change point on a data stream with $t=m+n$ observations has a complexity in
$\O\left(t^3\right)$ for each new observation. These properties render the direct application of MMD to change detection in data streams impractical.

In this paper, we introduce Maximum Mean Discrepancy on Exponential Windows
(MMDEW), a change detection algorithm for data streams that solves the above bottleneck.
Specifically, our \tb{contributions} include the following.

\begin{itemize}
	\item Our main contribution is MMDEW, a change detector based on an efficient online approximation of MMD. When considering the entire history of $t$ observations, the proposed method has a memory requirement of $\O\left(\log t\right)$ and a runtime complexity of $\O\left(\log^2 t\right)$ for each new observation. Otherwise, the algorithm has constant runtime and memory requirements.
	\item To achieve these complexities, we introduce a new data structure, which allows to approximate the quadratic time MMD in an online setting. We accomplish  the speedup by introducing windows that store summaries of the observations seen so far, and by storing a sample of logarithmic size of the observations per window.

	\item Our experiments on standard benchmark data sets show
	      that MMDEW performs better than state-of-the-art change detectors on four out of
	      the five tested data sets using the $F_1$-score.
	      For the more challenging setting of short detection delays, the proposed algorithm is better on three
	      out of six data sets.\footnote{Our code is available at \url{https://github.com/FlopsKa/mmdew-change-detector}.} %

\end{itemize}

\tb{Outline.} Section~\ref{sec:related-work} summarizes related work. Section~\ref{sec:definitions}
introduces the definitions and Section~\ref{sec:MMDEW} presents the proposed algorithm. We detail the experiments in Section~\ref{sec:experiments}. Section~\ref{sec:conclusions} concludes. We include illustrative proofs in the main text but defer technical proofs, additional details, and additional experiments to the appendices.

\section{Related work} \label{sec:related-work}

Change detection is an unsupervised task that has received and still is receiving a lot of interest. The earliest approaches, for example, \citet{shewhart25application,page54continuous}, originated from quality control and require strong parametric assumptions on the pre and post-change distributions. More recent work in the parametric regime weakens these assumptions by allowing post-change distributions from a parametric family with an unknown parameter \citep{lorden70glr,siegmund95changedetection} or by allowing any post-change distribution \citep{sparks00cusum,lorden05change,abbasi19cusum,xie23window}.

In our setting, both the pre and post-change distribution are assumed to be unknown, which is a challenging setting that can be tackled with non-parametric approaches. We detail the approaches most related to our proposed method in the following and refer to \citet{wang24sequential} for a recent more extensive survey on parametric and non-parametric change detection methods.

A principled approach for comparing distributions in a stream in a non-parametric fashion is to use a corresponding statistical test. ADWIN \citep{DBLP:conf/sdm/BifetG07} is a classic example but it is limited to univariate data and only detects changes in mean. ADWINK \citep{DBLP:journals/inffus/FaithfullDK19} alleviates the former by running one instance of ADWIN per feature and issues a change if a predefined number of the instances agree that a change occurred. Hence, the approach can only detect changes in the means of the marginal distributions and changes in higher moments or the covariance structure can not be detected.
Still, the authors find that such an ensemble of univariate change detectors often outperforms multivariate detectors.
WATCH \citep{DBLP:conf/bigdataconf/FaberCSBJ21} is a recent approach that uses a two-sample test based on the Wasserstein distance. However, the estimation of the Wasserstein distance requires density estimation, which is difficult for high-dimensional data \citep{scott1991feasibility}.
The method \citet{DBLP:conf/ida/DasuKLVY09} is conceptually similar to our method, as it also relies on two-sample tests and is non-parametric, but it also requires density estimation.

In contrast, the computation of MMD-based two-sample tests does not become more difficult on high-dimensional data,
which renders their usage for change detection on such data promising. We refer to \citet{DBLP:journals/ftml/MuandetFSS17} for a general overview of kernel mean embeddings and MMD.

There exist methods to compute MMD in the streaming setting, for example, linear time tests \citep{gretton2012kerneltwosample}, but their statistical power is low. \citet{DBLP:conf/nips/ZarembaGB13} introduce $B$-tests, which have higher power. However, both can not directly be used for change detection.
\citet{li2019scanbstatistics} enable the estimation of MMD on data streams for
change detection by introducing Scan $B$-statistics. \citet{wei22online} extend upon their work by considering multiple Scan $B$-statistics in parallel and introduce online kernel CUSUM. Another method enabling the computation of MMD on data streams is NEWMA \citep{keriven2020mmdchangedetection}, which is based on random
Fourier features \citep{DBLP:conf/nips/RahimiR07,sriperumbudurszabo15optimal}, a well-known kernel
approximation. NEWMA also allows detecting changes on streaming data. %
\citet{harchaoui07retrospective} apply kernel-based tests for offline change point detection on audio and
brain-computer-interface data.

A conceptually different approach to find changes is using classifiers.
D3 \citep{DBLP:conf/cikm/GozuacikBBC19} maintains two consecutive sliding windows
and trains a classifier to distinguish their elements. It reports a change if
the classifier performance, measured by AUC, drops below a threshold.
Another recent
algorithm is IBDD~\citep{DBLP:journals/kais/SouzaPCM21}, which scales well with
the number of features.

In our experiments in the main text, we compare MMDEW to ADWINK, WATCH, Scan $B$-Statistics,
NEWMA, D3, and IBDD as these allow change detection on multivariate streams (in $\R^d$). These algorithms differ w.r.t.\ their runtime complexity, their theoretical
properties, the data types that they can handle, and the types of changes that they can
detect. We summarize their main properties in Table~\ref{tab:comparison}.\footnote{$^a$We refer to their used implementation of the Wasserstein distance computation and the discussion therein~\citep[Ch.~6]{merigot11multiscale}.
	$^b$$m$ is the number of random Fourier features and $m\ll d$.
	$^c$The complexity results from the matrix inversion of the logistic regression model, which has cubic runtime cost.
	$^d$Size of the constructed $q\times p$ image.}
We consider the dimensionality $d$ as constant for the complexities where its influence is dominated by other terms
and for approaches not restricted to Euclidean domains. In Appendix~\ref{appendix:edd-mtd-comparison} and Appendix~\ref{appendix:univariate-comparison}, we collect additional experiments on synthetic data. In particular, we additionally compare the proposed method to online kernel CUSUM and to multivariate adaptations of the Cramer-von-Mises change point model (CvM CPM; \citealt{ross12controlcharts}) and non-parametric Focus \citep{romano23changedetection}.

\begin{table}
	\centering
	\caption{Comparison of change detectors.  Complexity --- runtime complexity per new
		observation, ARL / MTD --- type of known results, domain ---
		data types, $t$ ---
		total number of observations, $d$ --- dimensionality (for Euclidean spaces), $k$
		--- parameter, $W$ --- window length / block size, $N$ --- number of windows.}

	\begin{tabular}{lccc}
		\toprule
		Algorithm  & Complexity                & ARL / MTD   & Domain      \\ %
		\midrule
		ADWINK %
		           & $\O\left(dk\log W\right)$ & empirical   & $\R^d$      \\ %
		WATCH      & unknown$^a$               & empirical   & $\R^d$      \\ %
		Scan $B$   & $\O\left(NW^2\right)$     & analytical  & topological \\ %
		NEWMA      & $\O\left(md\right)$$^b$   & analytical  & $\R^d$      \\ %
		D3         & $\O\left(W^3\right)$$^c$  & none        & $\R^d$      \\
		IBDD       & $\O\left(pq\right)$$^d$   & none        & $\R^d$      \\ %
		\tb{MMDEW} & $\O\left(\log^2 t\right)$
		           & empirical                 & topological               \\
		\bottomrule
	\end{tabular}

	\label{tab:comparison}
\end{table}

\section{Definitions and background}
\label{sec:definitions}
This section defines our problem and recalls kernels, the mean embedding, maximum mean discrepancy, and two-sample testing.

\tb{Problem definition.}
Let $(\X, \tau_{\X})$ be a topological space, $\mathcal B(\tau_{\X})$ the Borel sigma-algebra induced by $\tau_{\X}$, and $\mathcal M_1^+(\X)$ the set of probability measures on $\X$ meant w.r.t.\ the measurable space $(\X, \mathcal B(\tau_\X))$.
We consider a data stream, that is, a possibly infinite sequence of observations, $x_1, x_2, \dots, x_t, \dots$ for $t = 1,2,\dots$, and $x_t \in \mathcal{X}$.
Each $x_t$ is generated independently following some distribution $D_t \in \mathcal M_1^+(\X)$. If there exists $t^*$
such that for $i < t^*$ and $j \geq t^*$ we have $D_i \neq D_j$, then $t^*$ is a change point, and our task is to detect it; in practice, a $D_t$ typically generates a range of i.i.d.\ observations.
We note that these definitions place few assumptions on the type of data, that is, we only require the data to reside in a topological space.

\tb{Kernel mean embedding.} Let $\H$ be a reproducing kernel Hilbert space (RKHS) on $\X$, which means
that the linear evaluation functional $\delta_x : \H \to \R$ defined by
$\delta_x(f) = f(x)$ is bounded for all $x \in \X$ and $f\in\H$. By the Riesz
representation theorem \citep{reed1972functional}, there exists for each $x \in
	\mathcal{X}$ a unique vector $\phi(x) \in \mathcal H$ such that for every $f \in
	\mathcal H$ it holds that $f(x) = \delta_x(f) = \langle f, \phi(x) \rangle$. The
function $\phi(x)$ is the reproducing kernel for $x$ and also called feature
map; it has the canonical form $x \mapsto k(\cdot,x)$, with the function $k :
	\mathcal{X} \times \mathcal{X} \to \mathbb R$ the reproducing kernel associated
to~$\mathcal H$. With this kernel, it holds that $k(x_1,x_2) = \langle \phi(x_1), \phi(x_2) \rangle = \langle k(\cdot, x_1), k(\cdot, x_2) \rangle$ for all $x_1, x_2 \in \mathcal X$~\citep{steinwart08support}.
The mean embedding of a probability measure $\P \in \mathcal{M}_1^+(\X)$ is the element
$\mu(\P) \in \mathcal H$ such that $\mathbb E_{X\sim\P}\left[f(X)\right] =
	\langle f, \mu(\P)\rangle$ for all $f \in \mathcal H$. The mean embedding
$\mu(\P)$ exists if $k$ is measurable and
bounded~\citep[Prop.~2]{sriperumbudur10hilbert}, which we assume throughout the article.

\tb{Maximum mean discrepancy.}
MMD is defined by $\mathrm{MMD}(\P,\Q) = \|\mu(\P)-\mu(\Q)\|$, where $\mu(\P), \mu(\Q) \in \H$ are the mean embeddings of $\P,\Q \in \mathcal M_1^+(\X)$, respectively.

Let $X\sim\P$, $Y\sim\Q$ and $X'$, $Y'$ independent copies of $X$, $Y$, respectively.
The squared population MMD \citep[Lemma 6]{gretton2012kerneltwosample} then takes the form
\begin{align*}
	\MMD^2(\P, \Q) & = \mathbb{E}\left[k(X,X')\right] + \mathbb{E}\left[k(Y,Y')\right] - 2 \mathbb{E}\left[k(X,Y)\right],
\end{align*}
where the expectations are taken w.r.t.\ to all sources of randomness.
For observations $\hat \P_m = \{x_1,\dots,x_m\} \stackrel{\text{i.i.d.}}{\sim} \P$ and $\hat \Q_n = \{y_1,\dots,y_n\} \stackrel{\text{i.i.d.}}{\sim} \Q$,
a biased estimator is obtained by replacing the population means with their empirical counterparts
\begin{align}
	\label{eq:biased-mmd}
	\mathrm{MMD}^2\left(\hat \P_m,\hat \Q_n\right) & = \frac{1}{m^2}\sum_{i,j=1}^mk(x_i,x_j) +
	\frac{1}{n^2}\sum_{i,j=1}^nk(y_i,y_j)
	- \frac{2}{mn}\sum_{i,j=1}^{m,n}k(x_i,y_j).
\end{align}
The runtime complexity of (\ref{eq:biased-mmd}) is in $\O\left(m^2+n^2\right)$. We will base our proposed approximation on \eqref{eq:biased-mmd}.

\tb{Two-sample testing.} To decide whether the value of $\MMD\left(\hat \P_m,\hat \Q_n\right)$ indicates a significant difference between
$\P$ and $\Q$, one tests the null hypothesis $H_0
	: \P=\Q$ versus its alternative $H_1 : \P\neq \Q$ by defining an acceptance
region for a given level $\alpha \in (0,1)$, which takes the form $ \MMD\left(\hat \P_m,\hat \Q_n\right)
	< \epsilon_{\alpha}$. One rejects $H_0$ if the test statistic exceeds the
threshold. The level $\alpha$ is a bound for the probability that the tests
rejects $H_0$ incorrectly \citep{casella1990statisticalinference}. Assuming that $k$ is nonnegative and bounded by $K>0$, that is, $0\le k(x,y)\le K$ for all $x,y \in \X$,~\citet[Corollary~9]{gretton2012kerneltwosample} provides the distribution-free threshold
$\epsilon_\alpha$ for the case that both samples $\hat \P_m$ and $\hat \Q_m$ have the same size
($m=n$) as
\begin{align}
	\label{eq:mmd-original-threshold}
	\MMD\left(\hat \P_m,\hat \Q_m\right) < \sqrt{\frac{2K}{m}}\left(1+\sqrt{2\log \frac{1}{\alpha}}\right).
\end{align}
Computing \eqref{eq:mmd-original-threshold} costs $\O(1)$. As the change detection setting requires the case that $m\neq n$, we extend their threshold accordingly in what follows.

\section{Our proposed algorithm}
\label{sec:MMDEW}

We introduce MMDEW in three steps. We first extend the threshold for the MMD two-sample test, \eqref{eq:mmd-original-threshold}, to samples
of unequal sizes (Section~\ref{sec:threshold}).
We then introduce our data structure that enables the efficient computation of MMD on data streams (Section~\ref{sec:MMDEW-data-structure}). Last, we describe the complete algorithm in
Section~\ref{sec:MMDEW-alg}.

\subsection{Threshold for the hypothesis test}
\label{sec:threshold}
Given a sequence of observations $\{x_1,\dots,x_t\}$ up until time $t$ our goal is to test the null hypothesis $\P = \Q$ for any two neighboring windows $X\cdot Y =
	\{x_1,\dots,x_i\}\cdot\{x_{i+1},\dots,x_t\}$, with $i=1,\dots,t-1$.
Our following proposition extends \citet[Theorem~8]{gretton2012kerneltwosample}, which considers the case $m=n$, giving the
distribution-free acceptance region for $m\neq n$ (corresponding to the setting that one generally encounters in change detection). The proof is deferred to Appendix \ref{sec:proof-hyp-test}.

\begin{proposition} \label{prop:hyp-test}
	Let $\P, \Q \in \mathcal M_1^+(\mathcal X)$,
	$\hat \P_m = \{x_1,\ldots,x_m\}\stackrel{\text{i.i.d.}}{\sim} \P$,
	$\hat \Q_n = \{y_1,\ldots,y_n\}\stackrel{\text{i.i.d.}}{\sim} \Q$.
	Assume that $0\leq k(x,y) \leq K$ for all
	$x,y \in \X$ and $t > 0$. Then a hypothesis test of level at most $\alpha >0$
	for $\P=\Q$ has the acceptance region
	\begin{align}
		\label{eq:mmd-new-threshold}
		\MMD\left(\hat \P_m, \hat \Q_n\right) < \sqrt{\frac K m + \frac K n }\left(1+ \sqrt{2\log \alpha^{-1}}\right) =: \epsilon_\alpha.
	\end{align}
\end{proposition}

Note that, when considering multiple possible change points, one needs to account for multiple testing in order to achieve an overall level of size $\alpha$. For example, one may adjust $\epsilon_\alpha$ through Bonferroni correction ($\epsilon_{\alpha}' = {\epsilon_{\alpha}}/{(t-1)}$) by dividing by the total number of tests. %

To perform change detection with MMD, it is natural to consider the stopping time
\begin{align}
	T = \inf\left\{t : \max_{n=1,\ldots,t-1} \left[\MMD\left(\P_m,\Q_n\right) \ge \epsilon_\alpha\right] = 1\right\}, \label{eq:mmd-stopping-time}
\end{align}
for $m=t-n$, the empirical measures $\P_m = \{x_1,\ldots,x_m\}$, $\Q_n = \{x_{m+1},\ldots,x_t\}$, and the brackets equal to one if their argument is true and zero otherwise \citep{graham1994concretemathematics}; we note that $\epsilon_\alpha := \epsilon_\alpha(m,n)$ depends on the respective sizes of the subsamples considered. In other words, a change is indicated by the first time any MMD estimated across all splits exceeds its threshold. However, due to the quadratic runtime requirements of MMD, the computation of \eqref{eq:mmd-stopping-time} costs $\O\!\left(t^3\right)$ for each new observation.

We now introduce our novel data structure that allows considering multiple possible change points efficiently.

\subsection{Proposed data structure}
\label{sec:MMDEW-data-structure}

One common method to obtain a good runtime complexity in change detection algorithms is to slice the data into windows of exponentially increasing sizes \citep{DBLP:conf/sdm/BifetG07}. Recent observations are collected in smaller windows, and older observations are grouped into larger windows. This leads to a fine-grained change detection in the recent past and more coarse-grained change detection in the distant past.

Our new data structure adopts this concept and, at the same time, facilitates the computation of MMD. In what follows, we first describe the properties of the proposed data
structure. Then, we show how to update the data structure and explain its use for
change detection.

\subsubsection{Properties}
We use $2$ as the basis for the exponential slicing. Then, after observing $t$ elements, the number of windows
stored in the data structure corresponds to the number of ones in the binary
representation of $t$. We may thus index the windows as $B_l, \dots, B_0$ (in
decreasing order), with the largest position being $l = \lfloor \log_2 t \rfloor$. A window does not exist if the binary representation of $t$ at this position is
zero.

If it exists, a window $B_s = \left(\mathrm X_s, \mathrm{XX}_s, \mathrm{XY}_s\right)$ at position $s = 0,\dots, l$ stores $2^s$ observations
\begin{align}
	\mathrm{X}_s = \left\{x_1^{(s)},\dots,x_{2^s}^{(s)}\right\}, \label{eq:def-x-i}
\end{align}
together with the summaries
\begin{align}
	\mathrm{XX}_s & = \sum_{i,j=1}^{2^s}k\left(x_i^{(s)},x_j^{(s)}\right), \label{eq:xx-values}                                                                                                                                                                                                      \\
	\mathrm{XY}_s & = \Bigg\{\underbrace{\sum_{i=1}^{2^s}\sum_{j=1}^{2^{s+1}}k\left(x_i^{(s)},x_j^{(s+1)}\right)}_{=:   \mathrm{XY}_s^{s+1}},\dots,\underbrace{\sum_{i=1}^{2^s}\sum_{j=1}^{2^l}k\left(x_i^{(s)},x_j^{(l)}\right)}_{=: \mathrm{XY}_s^{l}}\Bigg\},\hspace{0.25cm} \label{eq:xy-values}
\end{align}
where $\mathrm{XX}_s \in \R$ is the sum of the kernel $k$ evaluated on all pairs of the
window's own observations, and $\mathrm{XY}_s$ stores a list
of sums of the
kernel evaluated on the window's own observations and the observations in windows
coming before it.\footnote{Note that the superscript $(s)$ of the $x_i^{(s)}$-s indicates the corresponding window $B_s$.} Storing a list enables the efficient merging of windows, elaborated in Lemma~\ref{lemma:merging}. The length of the list $\mathrm{XY}_s$ equals the number
of windows having observations older than window $B_s$ and is at most $\lfloor
	\log_2 t \rfloor$. We use $\mathrm{XY}_i^j$ to represent the entry in
$\mathrm{XY}_i$ that refers to the window $B_j$. Specifically, in
\eqref{eq:xy-values}, $\mathrm{XY}_s^{s+1}$ stores the interaction of $B_s$
with $B_{s+1}$; similarly, $\mathrm{XY}_s^{l}$ stores its interaction with~$B_{l}$.

\begin{remark}\label{remark:value-l}
	Given a stream of data $x_1,x_2,\ldots,x_t$, \eqref{eq:def-x-i} corresponds to the mapping $x_i^{(s)} = x_\ell$, with $\ell = \sum_{j=s+1}^{\lfloor\log t\rfloor}2^j[B_j \text{ exists}] +i$, where the bracket is one if the argument is true and zero otherwise (using Iverson's convention; \citealt{graham1994concretemathematics}). A bucket $B_j$ exists if the $j$-th right-most digit in the binary expansion of $t$ is $1$.
\end{remark}

We summarize two of the main properties of the data structure as lemmas.
Lemma~\ref{prop:mmd-neighboring-buckets} establishes that one can compute the
value of MMD between two windows with constant complexity. The proof follows from
comparing \eqref{eq:xx-values} and \eqref{eq:xy-values} with~\eqref{eq:biased-mmd}. Lemma~\ref{lemma:merging} shows that windows can be merged with logarithmic runtime
complexity. These results provide our first steps towards efficiently computing MMD in a data stream.

\begin{lemma}
	\label{prop:mmd-neighboring-buckets}
	Let $B_{s+1}$ and $B_s$ be any two neighboring windows with elements $\mathrm{X}_{s+1} = \left\{x_1^{(s+1)},\dots,x_{2^{s+1}}^{(s+1)}\right\}$ and $\mathrm{X}_{s} = \left\{x_1^{(s)},\dots,x_{2^s}^{(s)}\right\}$, and sums as defined by \eqref{eq:xx-values} and~\eqref{eq:xy-values}, respectively. Then
	\begin{align}
		\MMD^2\left(\mathrm X_{s+1},\mathrm X_s\right) & = \frac{1}{(2^{s+1})^2} \mathrm{XX}_{s+1} + \frac{1}{(2^{s})^2}\mathrm{XX}_s - \frac{2}{(2^{s+1})(2^{s})}\mathrm{XY}_s^{s+1}, \label{eq:mmd-neighboring-buckets}
	\end{align}
	with a computational complexity of $\O(1).$
\end{lemma}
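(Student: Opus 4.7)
The plan is essentially a direct substitution followed by a bookkeeping check on the stored sums. Start from the biased MMD estimator in equation \eqref{eq:biased-mmd} and instantiate it with $X = X_{s+1}$ of size $m = 2^{s+1}$ and $Y = X_s$ of size $n = 2^s$. This yields three double sums whose prefactors are $1/(2^{s+1})^2$, $1/(2^s)^2$, and $-2/((2^{s+1})(2^s))$, matching the coefficients on the right-hand side of \eqref{eq:mmd-neighboring-buckets}.

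Next, I would identify each of these three double sums with a precomputed quantity stored by the data structure. The first self-interaction sum $\sum_{i,j=1}^{2^{s+1}} k(x_i^{s+1}, x_j^{s+1})$ is exactly $\mathrm{XX}_{s+1}$ by definition \eqref{eq:xx-values}, and likewise the second is $\mathrm{XX}_s$. The cross-term $\sum_{i=1}^{2^{s+1}} \sum_{j=1}^{2^s} k(x_i^{s+1}, x_j^s)$ matches $\mathrm{XY}_s^{s+1}$ as defined in \eqref{eq:xy-values}; here one should briefly note that by symmetry of the kernel $k(x,y) = \langle \phi(x), \phi(y)\rangle = k(y,x)$ swapping the roles of the summation indices does not change the value, so the direction in which the cross-sum is stored (older bucket's observations first versus newer bucket's first) is immaterial.

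Finally, to argue the $\mathcal{O}(1)$ complexity, observe that once $\mathrm{XX}_{s+1}$, $\mathrm{XX}_s$, and $\mathrm{XY}_s^{s+1}$ have been retrieved from the data structure, evaluating the right-hand side of \eqref{eq:mmd-neighboring-buckets} requires only a constant number of scalar multiplications, divisions, and additions, independent of $s$ or the stream length $t$. Lookup of the three stored scalars is also constant-time under the usual RAM model assumed for the buckets.

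There is no real obstacle here beyond verifying the index matching carefully; the lemma is in effect a compatibility statement that the quantities \eqref{eq:xx-values} and \eqref{eq:xy-values} have been chosen precisely so that the biased MMD between two neighboring buckets can be read off without further kernel evaluations. The only mildly subtle point is the symmetry argument for the cross-term, which deserves a single sentence to avoid any ambiguity about the roles of $X_{s+1}$ and $X_s$ in the definition of $\mathrm{XY}_s^{s+1}$.
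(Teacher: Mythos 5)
Your proposal is correct and is essentially the paper's own proof (which simply says to compare \eqref{eq:xx-values} and \eqref{eq:xy-values} with \eqref{eq:biased-mmd}), only spelled out in more detail, including the kernel-symmetry remark for the cross-term and the constant-time evaluation of the closed-form expression. No gaps.
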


\begin{lemma}
	\label{lemma:merging}
	Merging two windows $B_{s+1}$ and $B_s$ into a new window $B'$, such that $B'$ stores \eqref{eq:def-x-i}, \eqref{eq:xx-values}, and \eqref{eq:xy-values} costs $\O\left(\log t\right)$.
\end{lemma}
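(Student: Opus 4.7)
The plan is to build the three stored quantities for the merged bucket $B'$ directly from the summaries already maintained by $B_{s+1}$ and $B_s$, and then to add up the elementary operations. Let $X' = X_{s+1}\cup X_s$, and denote by $B_{s+2},\dots,B_l$ the buckets older than $B_{s+1}$; there are at most $l-s-1\leq \lfloor\log_2 t\rfloor$ of them.

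First, for the self-interaction sum $XX'$, I would split the double sum over $X'\times X'$ into the diagonal blocks $X_{s+1}\times X_{s+1}$ and $X_s\times X_s$ plus twice the off-diagonal block $X_{s+1}\times X_s$. Using bilinearity, this yields
\begin{align*}
XX' = XX_{s+1} + XX_s + 2\, XY_s^{s+1},
\end{align*}
where all three summands on the right are already stored in $B_{s+1}$ and $B_s$; hence $XX'$ is assembled in $\O(1)$.

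Second, for the list $XY'$, I would observe that for every older bucket $B_j$ with $j\in\{s+2,\dots,l\}$,
\begin{align*}
XY'^{j} = \sum_{x\in X',\, y\in X_j} k(x,y) = XY_{s+1}^{j} + XY_s^{j},
\end{align*}
so each entry can be formed in $\O(1)$ from stored quantities. Since the list has at most $\lfloor\log_2 t\rfloor$ entries, assembling $XY'$ in full takes $\O(\log t)$ time.

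The main obstacle is the element list $X'$ itself: a naive concatenation of $X_{s+1}$ and $X_s$ would cost $\O(2^{s+1})$, exceeding the target. I would circumvent this by keeping the observations in a pointer-based (e.g., linked or persistent) representation for which concatenation is $\O(1)$, so that no observation is copied during the merge; the per-observation work is charged to the insert step, not to the merge. Summing the three contributions then yields the claimed $\O(\log t)$ bound.
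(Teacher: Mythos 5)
Your proof is correct and follows essentially the same route as the paper: the identity $\mathrm{XX}' = \mathrm{XX}_{s+1} + \mathrm{XX}_s + 2\,\mathrm{XY}_s^{s+1}$ in $\O(1)$, followed by element-wise addition of the two $\mathrm{XY}$-lists (dropping the entry $\mathrm{XY}_s^{s+1}$ already consumed) over at most $\lfloor\log_2 t\rfloor$ entries. Your additional observation about the cost of forming the element list $X'$ itself is a detail the paper's proof passes over in silence, and your pointer-based fix is a legitimate way to keep the merge within the stated bound.
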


Besides showing the result, the proof of Lemma~\ref{lemma:merging} illustrates the steps that allow merging windows efficiently.
\begin{proof}
	For computing $\mathrm{XX}'$, we use the symmetry of $k$ to obtain
	\begin{align}
		\mathrm{XX}' & = \sum_{i,j=1}^{2^{s+1}} k\left(x_i^{(s+1)},x_j^{(s+1)}\right) + \sum_{i,j=1}^{2^{s}} k\left(x_i^{(s)},x_j^{(s)}\right)  + \sum_{i=1}^{2^{s+1}}\sum_{j=1}^{2^s} k\left(x_i^{(s+1)}, x_j^{(s)}\right) + \sum_{i=1}^{2^{s}}\sum_{j=1}^{2^{s+1}} k\left(x_i^{(s)}, x_j^{(s+1)}\right) \nonumber \\
		             & = \sum_{i,j=1}^{2^{s+1}} k\left(x_i^{(s+1)},x_j^{(s+1)}\right) + \sum_{i,j=1}^{2^{s}} k\left(x_i^{(s)},x_j^{(s)}\right) + 2 \sum_{i=1}^{2^{s+1}}\sum_{j=1}^{2^s} k\left(x_i^{(s+1)}, x_j^{(s)}\right) = \mathrm{XX}_{s+1} + \mathrm{XX}_s + 2 \mathrm{XY}_s^{s+1}, \label{eq:merge-xx}
	\end{align}
	which has a runtime complexity in $\O(1)$.

	To compute $\mathrm{XY}'$, we note that $B_{s+1}$ stores the list $\mathrm{XY}_{s+1}$ of kernel evaluations corresponding to all windows coming before it. The same holds for $B_s$, for which the list has one more element, $\mathrm{XY}_s^{s+1}$, which was used in \eqref{eq:merge-xx}. All the elements in $\mathrm{XY}_s$ and $\mathrm{XY}_{s+1}$ are sums and thus additive; it suffices to merge both lists by adding their values element-wise, omitting $\mathrm{XY}_s^{s+1}$, and storing the result in $\mathrm{XY}'$. As each list has at most $\log t$ elements, merging them is in $\O\left(\log t\right)$.
\end{proof}

Specifically, the scheme facilitates the merging of windows of equal size, enabling us to establish the exponential structure outlined in the next section.

\subsubsection{Insertion of observations}
\label{sec:insert}

The structure is set up recursively. For each new observation, we create a new
window $B_0$, with $\mathrm{XX}_0$ as defined by \eqref{eq:xx-values} and
$\mathrm{XY}_0$ computed w.r.t.\ the already existing windows. If two windows have the
same size, we merge them by Lemma~\ref{lemma:merging}, which costs $\O\left(\log
	t\right)$. This yields $\lfloor \log t \rfloor$ windows
of exponentially increasing~sizes.

We illustrate the scheme in the following Example~\ref{ex:example-1} and the corresponding Figure~\ref{fig:example-1-vis}.

\begin{example} \label{ex:example-1}
	To set up the structure, we start with the first observation $x_1$ and create the first window $B_0$, with $\mathrm{XX}_0$ as defined by \eqref{eq:xx-values} and $\mathrm{XY}_0=\emptyset$.  When observing $x_2$, we similarly create a new window $B_0'$, now also computing $\mathrm{XY}_{0'}^0 =\{\mathrm{XY}_{0'}^0\}$.
	As $B_0$ and $B_0'$ have the same size, we merge them into $B_1$, computing $\mathrm{XX}_1$ with \eqref{eq:merge-xx}. No previous window exists so that $\mathrm{XY}_1 = \emptyset$. We repeat this for all new observations, for example, for $x_3$, one creates (a new) $B_0$, computing $\mathrm{XX}_0$ and $\mathrm{XY}_0=\{\mathrm{XY}_0^1\}$, which results in two windows, $B_1$ and~$B_0$.
\end{example}

\begin{figure}
	\centering
	\includegraphics[width=.58\linewidth]{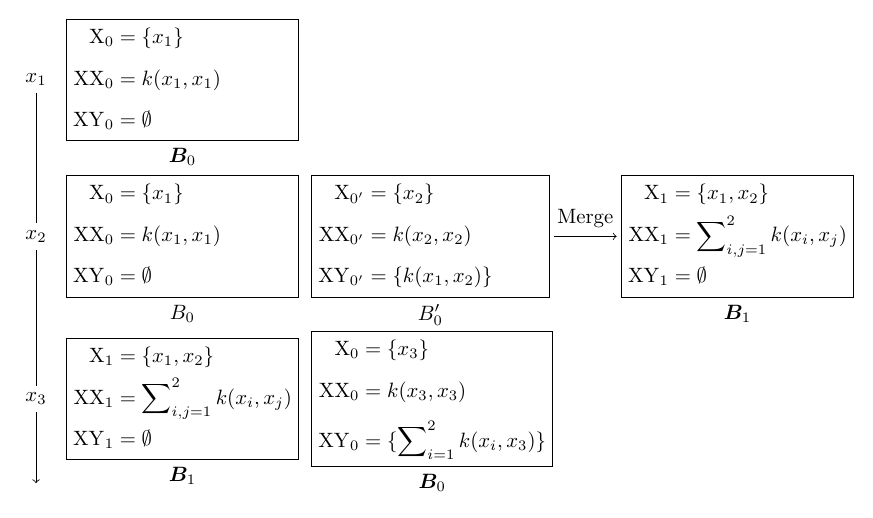}
	\caption{Schematic representation of Example~\ref{ex:example-1}. For a given step, the proposed scheme stores the windows in bold face.}
	\label{fig:example-1-vis}
\end{figure}

\subsubsection{MMD computation and change detection}
\label{sec:mmd-comp-change}
We now show that we can compute the MMD statistic \eqref{eq:biased-mmd} at positions between windows with a runtime complexity of $\O(\log t)$.

\begin{proposition}
	\label{prop:merging-possible}
	Let $B_l,\dots,B_{s+1},B_s,\dots,B_0$ be a given list of windows with corresponding elements $\mathrm{X}_i$, $i=0,\dots,l$, as defined in \eqref{eq:def-x-i}. For any split $s \in \{1,\ldots,l-1\}$, the computation of
	\begin{align}
		\MMD^2\left(\bigcup_{i=s+1}^l \mathrm{X}_i, \bigcup_{i=0}^{s} \mathrm{X}_i\right), \label{eq:mmd-union}
	\end{align}
	that is, the computation of MMD between the elements in windows coming before window $B_s$ and the elements in windows coming after (and including) $B_s$,
	has a runtime complexity of $\O\left(\log t\right)$ for  $0 < s < l$, with $s,l \in \mathbb N$.

	\begin{proof}
		To obtain \eqref{eq:mmd-union}, one recursively merges $B_s,\dots,B_{0}$ to $B_s'$ using Lemma~\ref{lemma:merging}, starting from the right, and similarly $B_{l},\dots,B_{s+1}$ to $B_l'$. One then obtains the statistic with Lemma~\ref{prop:mmd-neighboring-buckets}, and by setting    $\mathrm{XY}_{s'}^{l'} = \sum_{i=1}^{l-s} \mathrm{XY}_{s'}^i$, that is, by summing all elements in the $\mathrm{XY}_{s}'$-list of~$B_s'$.
		This concludes the proof as the logarithmic complexity was already established.
	\end{proof}
\end{proposition}

The application of the presented data structure for change detection is as follows. For each new observation, we estimate MMD at any position between windows and
compare it to the threshold $\epsilon_\alpha' =
	\frac {\epsilon_\alpha}{l}$ (with Bonferroni correction) from Proposition~\ref{prop:hyp-test}. We report a change when the value of MMD exceeds
the threshold. As there are at most $\log t$ windows, we have at most $\log t
	-1$ positions. Computing MMD for a position is in $\O\left(\log t\right)$ by
Proposition~\ref{prop:merging-possible}, and so the procedure has a total
runtime complexity of $\O\left(\log^2 t + t\right)$ per insert operation, where
the term linear in $t$ results from computing $\mathrm{XY}_0$ when inserting a new observation. We may equivalently consider the proposed method as providing a more coarse-grained estimate of \eqref{eq:mmd-stopping-time}, taking the form
\begin{align}
	T' = \inf\left\{t : \max_{s=1,\ldots,l} \left[\MMD\left(\bigcup_{i=s+1}^l \mathrm{X}_i, \bigcup_{i=0}^{s} \mathrm{X}_i\right) \ge \epsilon_\alpha\right] = 1\right\}, \label{eq:mmd-efficient-stopping-time}
\end{align}
with the $X_i$-s defined as in Proposition~\ref{prop:merging-possible}.\footnote{Recall that Remark~\ref{remark:value-l} gives the precise value of $l$. Further, some split positions $s$ may not exist.}

While the data structure in its current form allows to obtain the precise values of \eqref{eq:biased-mmd} in an incremental fashion, its runtime and memory complexity are $\O\left(t\right)$ for each new observation; these complexities are unsuitable for deploying the algorithm in the streaming setting. We reduce the runtime by subsampling within the windows, which we present together with the complete algorithm in the following section.

\subsection{MMDEW Algorithm}
\label{sec:MMDEW-alg}
Our algorithm builds upon the data structure discussed previously. But, we suggest that each window of size $2^s$, $s=0,\dots,l$, samples $s$ observations (of the total
$2^s$), that is, a logarithmic amount,  while keeping everything else as before.

In this section, we first analyze such subsampling and discuss its benefits. Afterwards, we present the complete algorithm. We refer to Appendix~\ref{sec:proof-count} for the proof of the following statement.

\begin{proposition}
	\label{prop:analysis-MMDEW}
	With subsampling, the number of terms in the sum  $\mathrm{XX}_l$ for a window at position $l$, $1 \le l$, $l \in \mathbb N$ is
	\begin{align}
		n_{\mathrm{XX}_l} & = 2^{l-1}\left(l^2-l+4\right) = \frac t2\left(\log_2^2t-\log_2t+4\right), \label{eq:num-xx}
	\end{align}
	with $t=2^l$ the number of observations of $B_l$. The number of terms of $\mathrm{XY}_l^l$ for windows of the same size, which occur prior to merging, is
	\begin{align}
		n_{\mathrm{XY}_l^l} = 2^{l} l =  t \log_2 t. \label{eq:num-xy}
	\end{align}
\end{proposition}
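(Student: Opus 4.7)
The plan is to compute the two quantities in sequence: first derive $n_{\mathrm{XY}_l^l}$ directly by tracing the construction of the newer bucket, and then express $n_{\mathrm{XX}_l}$ via a linear recurrence that invokes $n_{\mathrm{XY}_{l-1}^{l-1}}$.

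For $n_{\mathrm{XY}_l^l}$, I would fix the older $B_l$ (which, by the subsampling rule, stores $l$ observations) and reconstruct the cross-sum that accumulates while the newer $B_l$ is assembled. Each of the $2^l$ fresh observations that feeds into the newer $B_l$ first creates its own $B_0$, and at that moment its cross entry with the older $B_l$ is computed from the $l$ subsampled observations of the latter, giving exactly $l$ kernel evaluations. By the merge procedure established in Lemma~\ref{lemma:merging}, each subsequent merge combines $\mathrm{XY}$ lists element-wise, which is purely additive and preserves summand counts at every position. Consequently, after all $2^l$ observations have been absorbed into the newer $B_l$, the entry $\mathrm{XY}_l^l$ is the sum of $2^l$ cross-sums of $l$ terms each, yielding $n_{\mathrm{XY}_l^l} = 2^l l$.

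For $n_{\mathrm{XX}_l}$, applying the merge identity~\eqref{eq:merge-xx} to the two $B_{l-1}$ combined into $B_l$ gives, upon counting summands,
\begin{align*}
    n_{\mathrm{XX}_l} = 2\,n_{\mathrm{XX}_{l-1}} + 2\,n_{\mathrm{XY}_{l-1}^{l-1}} = 2\,n_{\mathrm{XX}_{l-1}} + 2^l(l-1), \quad l \geq 2,
\end{align*}
with base case $n_{\mathrm{XX}_1} = 4$ read off from $\mathrm{XX}_1 = \sum_{i,j=1}^{2} k(x_i,x_j)$. The substitution $n_{\mathrm{XX}_l} = 2^l u_l$ collapses the recurrence to $u_l = u_{l-1} + (l-1)$ with $u_1 = 2$, which telescopes to $u_l = 2 + \binom{l}{2} = (l^2 - l + 4)/2$. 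This gives $n_{\mathrm{XX}_l} = 2^{l-1}(l^2 - l + 4)$, and re-expressing via $t = 2^l$ yields the second form.

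The main care point is the bookkeeping around what subsampling actually affects: it changes the number of observations each bucket stores explicitly (and hence the kernel evaluations used when computing $\mathrm{XY}_0^r$ against an existing bucket), but it does \emph{not} retroactively prune the $\mathrm{XX}$ or $\mathrm{XY}$ sums that were already accumulated during earlier merges. Once this is pinned down and it is observed that merges perform a plain element-wise addition of $\mathrm{XY}$ lists, both claims reduce to the routine recurrences above.
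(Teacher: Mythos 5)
Your proof is correct, and it takes a genuinely different route from the paper's on both counts. For $n_{\mathrm{XY}_l^l}$, the paper does not actually derive the formula: it runs the implementation, observes the sequence $1,2,8,24,64,\dots$, and matches it against OEIS (A036289). Your argument --- each of the $2^l$ fresh observations contributes exactly $l$ kernel evaluations against the fixed $l$-element sample of the older $B_l$, and element-wise merging of the $\mathrm{XY}$ lists preserves summand counts --- supplies the combinatorial derivation the paper omits, and is arguably the more rigorous of the two. For $n_{\mathrm{XX}_l}$, you and the paper set up the identical recurrence $n_{\mathrm{XX}_l} = 2\,n_{\mathrm{XX}_{l-1}} + 2\,n_{\mathrm{XY}_{l-1}^{l-1}}$ from \eqref{eq:merge-xx}, but the paper solves it with an ordinary generating function, a partial-fraction decomposition of $(-8z^3+2z-1)/(2z-1)^3$, and coefficient extraction via known sequences, whereas your substitution $n_{\mathrm{XX}_l} = 2^l u_l$ reduces it to $u_l = u_{l-1} + (l-1)$ and a one-line telescoping sum. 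The two yield the same closed form; yours is shorter and avoids any appeal to tables or external sequences, while the paper's generating-function machinery would generalize more readily if the recurrence had a less convenient inhomogeneous term. Your closing remark that subsampling prunes only the stored observations, not the previously accumulated sums, is exactly the point that justifies the recurrence and is worth making explicit.
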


\begin{remark}
	The number of terms in the sums of \eqref{eq:biased-mmd} acts as a
	proxy for the quality of the estimate. It is optimal when no subsampling
	takes place; this number is $\O\left(t^2\right)$. When subsampling a logarithmic number of observations per window with our data structure (as we propose), one achieves polylogarithmic runtime and logarithmic memory complexity. At the same time, one achieves a better approximation quality than
	naively
	sampling a logarithmic number of observations without the
	summary data structure. While such sampling would also yield a memory complexity of $\O\left(\log t\right)$ when using the naive approach for change detection---that is, splitting the sample into two neighboring
	windows and computing $\MMD^2$---the number of terms in \eqref{eq:biased-mmd}
	would be $\O\left(\log^2 t\right)$. Proposition~\ref{prop:analysis-MMDEW} shows that the summary data structure improves upon this by a
	factor of approximately $t/2$ for $n_{\mathrm{XX}_l}$ and a factor of $t/\log_2
		t $ for  $n_{\mathrm{XY}_l^l}$ (we neglect logarithmic and constant terms in the former
	due to their small contribution).
\end{remark}

Algorithm~\ref{alg:optim-impl} now summarizes the complete algorithm, with $\MMD$ in
Line~\ref{alg:mmd-computation} referring to the computation of MMD as in
Proposition~\ref{prop:merging-possible}. MMDEW stores only a uniform sample of
size $l+1$, that is, of size logarithmic in the number of observations, while
keeping the respective $\mathrm{XX}_s$ and $\mathrm{XY}_s$, $s=0,\dots, l$,
computed before. With this approach, the number of samples in a window increases
by one each time the window is merged, and the memory complexity is logarithmic
in the number of observations. Note that one recovers the previous algorithm
(Section~\ref{sec:mmd-comp-change}) and therefore the precise value of
\eqref{eq:mmd-union} if one omits Line~\ref{alg:uniform-step}. Further, changes
in Line~\ref{alg:uniform-step} allow to adjust the subsampling, for example, the
user may defer the sampling until windows contain a minimum number of
observations, or choose a different function to control the sample size.

\setcounter{tmp}{\value{figure}}
\setcounter{figure}{0}
\renewcommand{\figurename}{Algorithm}
\begin{figure}
	\begin{algorithmic}[1]

		\Require{Data stream $x_1,x_2,\dots$, level $\alpha$}
		\Ensure{Change points in $x_1,x_2,\dots$; detection times}
		\State{$\mathit{windows} \gets \emptyset$} \Comment{List of windows}

		\For{each $x_i \in \{x_1,x_2,\dots\}$}
		\State{$X_0 \gets x_i$} \Comment{Initialize $B_0$}
		\State {$\mathrm{XX}_0  \gets k(x_i,x_i)$}  %
		\For{each $B_j \in \mathit{windows}$}
		\State{$\mathrm{XY}_0^j \gets \sum_{x_k^{(j)} \in B_j} k(x_i,x_k^{(j)})$}
		\EndFor
		\State{$B_0 = (\mathrm X_0,\mathrm{XX}_0,\mathrm{XY}_0)$}
		\State{$\mathit{windows} \gets windows \cup B_0$}
		\For{each split $s$ in $\mathit{windows} =  \{B_l,\dots,B_{s+1},B_s,\ldots, B_0\}$)} \Comment{Detect changes}
		\If{$\MMD\left(\bigcup_{j=s+1}^l\mathrm{X}_j, \bigcup_{j=0}^{s}\mathrm{X}_j\right) \geq \epsilon_\alpha'$}  \label{alg:mmd-computation}
		\State{\textbf{print} ``Change at $s$ detected at time $i$''}
		\State{$\mathit{windows} \gets B_s, \dots, B_0$}
		\Comment{Drop windows}
		\EndIf
		\EndFor
		\While{two windows have the same size $2^l$} \Comment{Maintain exponential structure}
		\State{Merge windows following Lemma~\ref{lemma:merging} into $B_{l+1}$}
		\State{Store a uniform sample of size $l+1$ in $\mathrm X_{l+1}$ of $B_{l+1}$}  \label{alg:uniform-step}
		\EndWhile
		\EndFor
	\end{algorithmic}
	\caption{Proposed MMDEW change detection algorithm.}\label{alg:optim-impl}
\end{figure}
\renewcommand{\figurename}{Figure}
\setcounter{figure}{\value{tmp}}

The following example illustrates the procedure.
Figure~\ref{fig:data-structure} expands upon Example~\ref{example:subsampling} and
shows the evolution of the data structure upon observing $x_1,\ldots,x_6$ and when merging windows.

\begin{example} \label{example:subsampling} We assume that there is a stream of
	i.i.d.\ observations $x_1, x_2, \dots$. Note that the i.i.d.\ assumption implies that
	there are no changes. MMDEW receives the first observation, $x_1$ and creates a
	window $B_0$ storing $x_1$, $\mathrm{XX}_0 = k(x_1,x_1)$, and $\mathrm{XY}_0 =
		\emptyset$. For the next observation, $x_2$, it creates a new window $B_{0'}$,
	storing $x_2$, $\mathrm{XX}_{0'} = k(x_2,x_2)$, and $\mathrm{XY}_{0'} =
		\{k(x_1,x_2)\}$ and detects no change. As $B_0$ and $B_{0'}$ have the same size,
	MMDEW merges them into window $B_1$, storing a sample of size $\log_2 2 = 1$,
	say, it stores $x_1$ and discards $x_2$, and computes $\mathrm{XX}_1 =
		k(x_1,x_1) + k(x_2,x_2) + 2 k(x_1,x_2)$, following~\eqref{eq:xx-values}. As no
	previous window exists, the computation of $\mathrm{XY}_1$ is not required. We
	see that the number of terms in $\mathrm{XX}_1$ equals four, while $B_1$ stores
	only one observation (established in Proposition~\ref{prop:analysis-MMDEW}).
	Next, the algorithm observes $x_3$ and creates a new window, $B_0$, storing
	$x_3$, $\mathrm{XX}_0=k(x_3,x_3)$, and computing $\mathrm{XY}_0$ to the window
	coming before, that is, $B_1$, so that $\mathrm{XY}_0=\{\mathrm{XY}_0^1\}$. In the
	next step, MMDEW receives $x_4$, again creating a new window $B_{0'}$. The
	algorithm now recursively merges the windows, that is, $B_0$ and $B_{0'}$ become
	$B_{1'}$, and $B_1$ and $B_{1'}$ then become~$B_2$. Upon receiving $x_5$, the algorithm creates a new window $B_0$, storing $x_5$, the kernel evaluation $k(x_5,x_5)$, and the interaction of $x_5$ with $\{x_1,x_4\}$ from $B_3$. We conclude the example with $x_6$, which leads to the creation of a new window $B_0'$. As in steps $2$ and $4$, $B_0$ and $B_0'$ will now be merged to obtain  $B_1$.
\end{example}

Algorithm~\ref{alg:optim-impl} has a runtime cost of $\O\left(\log^2 t\right)$ per insert operation and a total memory complexity of $\O\left(\log t\right)$.
This allows it to scale to very large data streams. Nevertheless, if one strictly requires constant time and memory, one can simply limit the number of windows at the expense of detecting changes only up to a certain time in the past. In the latter configuration, MMDEW fulfills the requirements for streaming algorithms laid out by \citet{domingos2003miningmassivedatastreams}.

\begin{figure}[t]
	\centering
	\includegraphics[width=.75\linewidth]{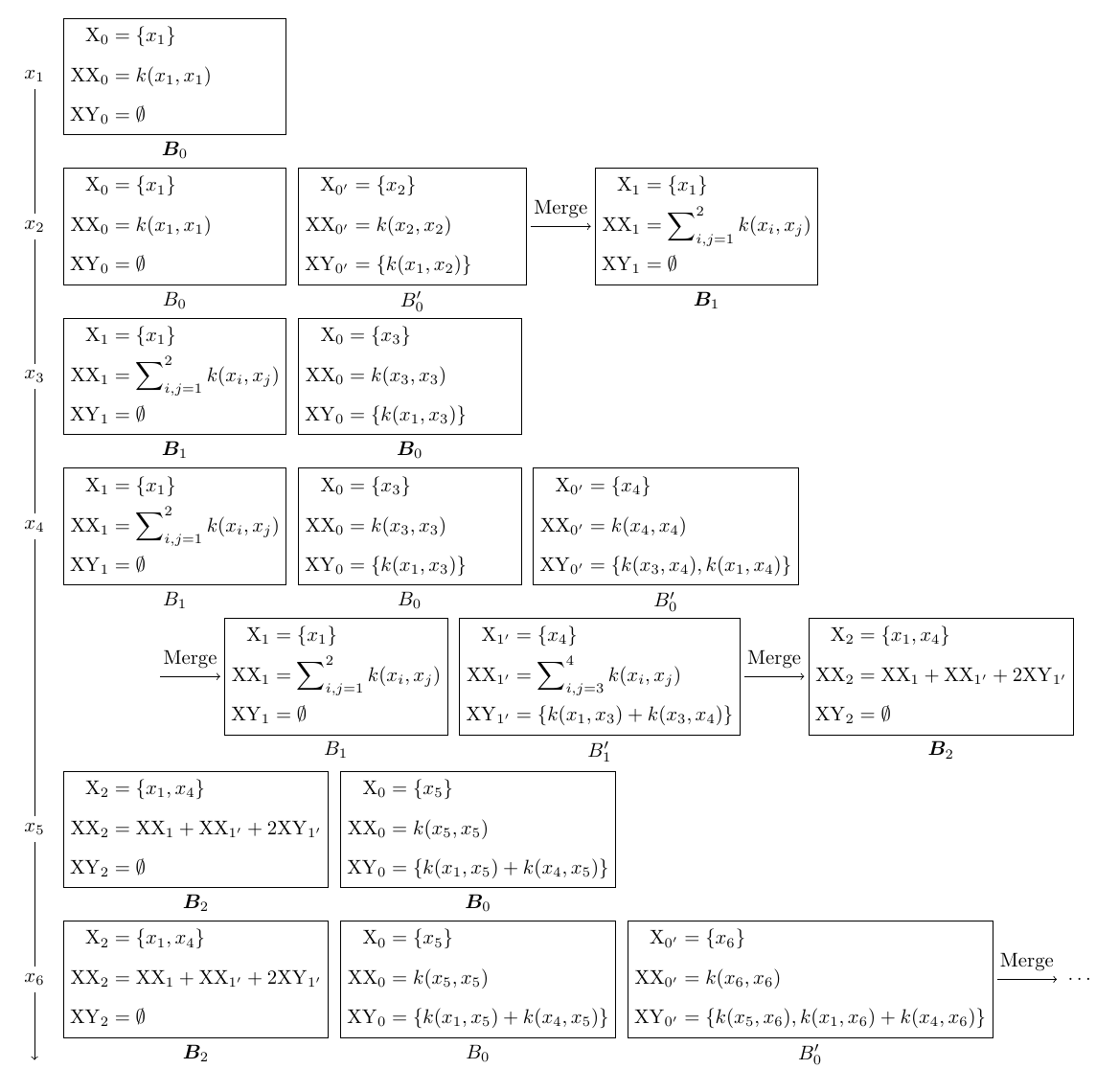}
	\caption{Set up of  data structure with subsampling upon inserting $x_1,\dots,x_6$. MMDEW stores the windows in bold face at the end of the merge operations. Observations $x_2$ and $x_3$ are not stored explicitly due to the sampling applied. $x_4$ is split into two lines for readability. See Example~\ref{example:subsampling} for a detailed discussion.}
	\label{fig:data-structure}
\end{figure}

\section{Experiments}
\label{sec:experiments}

This section showcases our approach on synthetic data (Section~\ref{sec:synthetic-data}) and on streams derived from real-world classification tasks (Section~\ref{sec:real-world-data}).
We ran all experiments on a server running Ubuntu 20.04 with 124GB RAM, and 32 cores with 2GHz each.

\subsection{Synthetic data}
\label{sec:synthetic-data}

To evaluate the average run length (ARL) and the mean time to detection (MTD) in a controlled environment, we first conduct experiments on synthetic data, comparing MMDEW to the MMD estimate \eqref{eq:biased-mmd} as baseline.\footnote{For computational reasons, we compute MMD as described in the discussion following Proposition~\ref{prop:mmd-neighboring-buckets}. F a fair comparison, we use the distribution-free bound of Proposition~\ref{prop:hyp-test} for both algorithms.} We also compare the runtime of MMDEW to that of existing change detectors. For an in-depth comparison of the ARL and MTD trade-off for kernel-based approaches with optimally chosen thresholds, see Appendix~\ref{appendix:edd-mtd-comparison}. For a comparison of MMDEW with univariate approaches, we refer to Appendix~\ref{appendix:univariate-comparison}.

\paragraph{ARL and MTD.}

The ARL quantifies the expected number of observations processed before a change detector flags a change, assuming $H_0$ holds. In the static setting, this corresponds to the type I error. Formally, for $\tilde T$ corresponding to the stopping time captured by Algorithm~\ref{alg:optim-impl}, that is, \eqref{eq:mmd-efficient-stopping-time} with subsampling applied, we are interested in $\E_{H_0} \tilde T$.

The error under the alternative ($H_1$ holds) is captured by the expected detection delay (EDD), also called ``mean time to detection (MTD)''. Specifically, a change detector processes a stream that contains a change at a known observation $\kappa$ ($\kappa = 1,\ldots,t$) and we want to know the delay until the change is reported $\E_{x_1,\ldots,x_{\kappa-1}\sim\P,x_\kappa,\ldots,x_t\sim\Q}\tilde T$, with $\P$ the pre-change distribution and $\Q \neq \P$ the post-change distribution. In other words, the EDD quantifies how many samples of $\Q$ must be processed to flag a change after having observed $\kappa-1$ samples from $\P$. Note that $\kappa$ must be chosen large enough to allow MMD to capture the difference in $\P$ and $\Q$, and we assume that the statistic does not exceed the treshold on the first $\kappa$ samples.
In a static setting, the EDD is comparable with the type II error. Note that existing literature \citep{xie12changepoint,wei22online} usually considers a fixed threshold $b$ for the stopping time for both ARL and EDD, while our threshold $\epsilon_\alpha$ depends on the position of the split considered. Experiments for a fixed value of $b$ are deferred to Appendix~\ref{appendix:edd-mtd-comparison}.

To approximate the ARL and the EDD in different scenarios, we simulate $5$-dimensional data distributed according to the multivariate normal
$\mathcal{N}\left(\bm 0, \mathbf I_{5}\right)$, the uniform
$\mathcal{U}[-\bm 1_{5},\bm1_{5}]$, the
$\mathrm{Laplace}\left(0,\sigma\mathbf I_{5}\right)$, and a mixed distribution, respectively.
The mixed distribution is taken to be $\mathcal{N}\left(\bm 0, \mathbf{I}_{5}\right)$ with
probability $0.3$ and $\mathcal{N}\left(\bm 0, \sigma^2\mathbf{I}_{5}\right)$
with probability $0.7$,
where $\bm 1_d$ denotes a vector of $d$ ones and
$\textbf{I}_d$ is the $d$-dimensional identity matrix. We
set $\sigma=3$.

To compute ARL, we consider $10{,}000$ observations distributed according to either the uniform, the Laplace, or the mixed distribution. Hence, the data does not contain any changes. For MTD, we first run both algorithms on $512$ ($=2^9$) and $1024$ ($=2^{10}$) observations, respectively, leading to MMDEW summarizing the data in one window in both cases. These observations are distributed according to $\mathcal{N}\left(\bm 0, \mathbf I_{5}\right)$ and then followed by either the uniform, Laplace, or mixed distribution. That is, we induce a change point at $\kappa = 513$ (resp.\ $\kappa = 1025$), and then count the number of observations processed from the new distribution until the algorithms report a change.

Figure~\ref{fig:arl-edd} collects the average results over $20$ repetitions. The left plot shows
that an increase in the level $\alpha \in (0,1)$ leads to a decrease in ARL. This is expected as the test becomes more sensitive, leading to more false positives. The baseline achieves a higher ARL but at the cost of an increased runtime.
The MTD plots (center and r.h.s.) mirror the ARL observation: The MTD decreases with increasing $\alpha$.
We further observe that the detection delay depends on the post-change distribution. The delay is comparably large when changing from the multivariate standard normal to the mixed distribution. This matches our intuition: the mixed distribution is relatively similar to the pre-change distribution, rendering it difficult to detect a change between them. For larger values of $\alpha$, that is, $\alpha \ge 0.2$, MMDEW performs similarly to the baseline in all cases. Comparing the MTD when the change happens after $512$ observations to the MTD when the change happens after $1024$ observations, the results show that more pre-change samples render the algorithms more sensitive to detecting changes, due to more samples improving the approximation of the mean embedding.
Overall, the results on these synthetic streams indicate that MMDEW is (i) robust to the choice of $\alpha$ and (ii) that $\alpha$ has the expected influence on the behavior of the algorithm.

\begin{figure}
	\centering
	\includegraphics[width=\mysize\textwidth ]{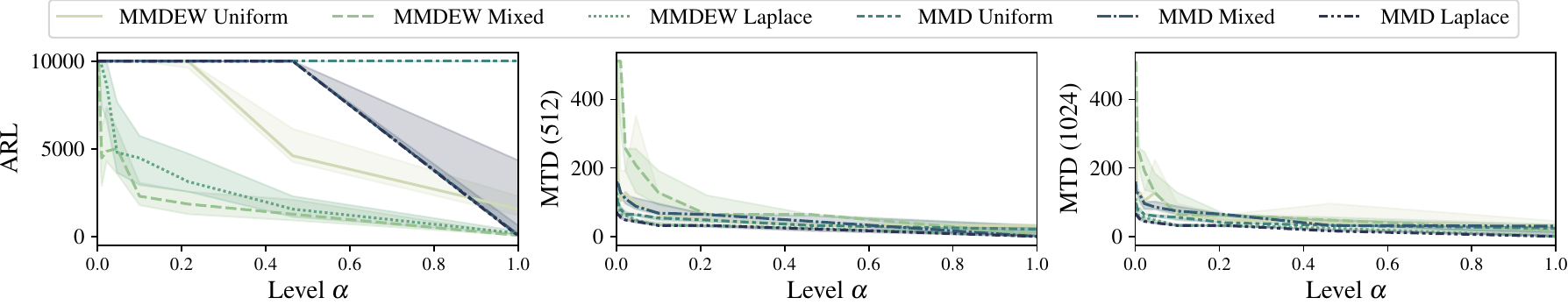}%
	\caption{Average run length (ARL) and expected detection delay / mean time to detection (EDD / MTD) of MMDEW on synthetically generated data.}
	\label{fig:arl-edd}
\end{figure}

\paragraph{Runtime.}

We now compare the runtime of MMDEW to that of its contenders and additionally validate the runtime guarantees that we derived analytically in Section~\ref{sec:MMDEW-alg}.

To this end, we generate a constant stream of
$10^6$ one-dimensional observations, that is, the observed stream contains no change.
Note that, while the dimensionality of the data affects the runtime depending on the used kernel, its influence is the same across all kernel-based algorithms, hence we limit our considerations to the univariate case.

Figure~\ref{fig:runtime} shows the average results over $10$ runs. The left
plot reveals that the fixed cost per insert of MMDEW is relatively large, as
processing a small number of observations requires comparably much time.
However, the runtime does not increase by much with the number of
observations. The figure also shows that the proposed algorithm's runtime is better than that of an alternate kernel-based method, Scan $B$-statistics,
where we use a window size of $\omega = 100$ in the runtime experiments. For $t>0.05\cdot 10^{6}$,
MMDEW also outperforms IBDD. Still, the other algorithms run faster than MMDEW but achieve a lower $F_1$ score in our later experiments.

The right plot of Figure~\ref{fig:runtime} verifies the analytically derived runtime of $\O \left(\log ^2 t\right)$ by fitting the corresponding curve ($t \mapsto c\log^2 t$) to the measured data with the least squares method.
The resulting mean squared error is approximately $10^{-6}$, which confirms the preceding asymptotic runtime analysis.

\begin{figure}
	\centering
	\includegraphics[width=\mysize\linewidth]{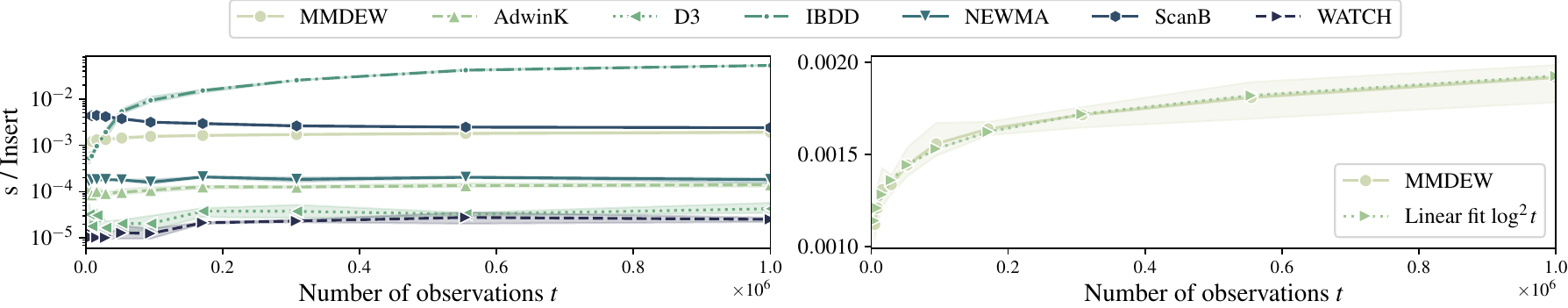}
	\caption{Comparison of runtimes per insert operation (l.h.s.) and least squares fit validating the theoretical runtime complexity of MMDEW w.r.t.\ the runtime observed in practice (r.h.s.).}
	\label{fig:runtime}
\end{figure}

\subsection{Real-world classification data}
\label{sec:real-world-data}

To obtain our change detection quality estimates, we use well-known classification data sets
and interpret them as streaming
data.\footnote{While MMDEW is not limited to Euclidean data, Euclidean data is the type of data most frequently encountered in practice, and our experiments target at this setting.} This is common in the literature, for example, \citet{DBLP:journals/inffus/FaithfullDK19,DBLP:conf/bigdataconf/FaberCSBJ21}, as only few high-dimensional annotated change detection data sets are
publicly available.

For each data set, we first order the observations by their classes; a change occurs if
the class changes. To introduce variation into the order of change points, we
randomly permute the order of the classes before each run but use the same
permutation across all algorithms. For preprocessing, we apply min-max scaling
to all data sets. Table~\ref{tab:dataset-overview} summarizes the data sets, where $n$ is the number of observations, $d$ is the data dimensionality, and \#CP is the number of change points.

\begin{table}
	\centering
	\caption{Overview of data sets.}
	\begin{tabular}{lrrr}
		\toprule
		Data set                                               & $n$        & $d$       & \#CPs \\
		\midrule
		CIFAR10 \citep{krizhevsky2009learning}                 & $60{,}000$ & $1{,}024$ & $9$   \\
		FashionMNIST \citep{DBLP:journals/corr/abs-1708-07747} & $70{,}000$ & $784$     & $9$   \\
		Gas \citep{vergara2012chemical}                        & $13{,}910$ & $128$     & $5$   \\
		HAR \citep{DBLP:conf/esann/AnguitaGOPR13}              & $10{,}299$ & $561$     & $5$   \\
		MNIST \citep{DBLP:journals/spm/Deng12}                 & $70{,}000$ & $784$     & $9$   \\
		\bottomrule
	\end{tabular}
	\label{tab:dataset-overview}
\end{table}

We run a grid parameter
optimization per data set and algorithm
and report the best result w.r.t.\ the $F_1$-score.
We note that such an optimization is difficult to perform in practice---here one typically prefers approaches with fewer or easy-to-set parameters---but allows a fair comparison.
Table~\ref{tab:hyperparameters} in Appendix~\ref{app:hyperparameters} lists all the parameters we tested. We note that the grid parameter optimization allowed us to obtain better $F_1$-scores than the heuristics proposed in~\citet{keriven2020mmdchangedetection} for NEWMA and Scan $B$-statistics.

We exclude the
squared time estimator of MMD due to its prohibitive runtime. For kernel-based
algorithms (MMDEW, NEWMA, and Scan $B$-statistics)
we use the Gaussian kernel $k(x,y) = \exp\left(-\gamma\|x-y\|^2\right)$ ($\gamma > 0$) and set $\gamma$ using the
median heuristic \citep{garreau18large} on the first 100
observations. The Gaussian kernel is universal \citep{steinwart08support,szabo2017characteristic} and allows, given
enough data, to detect any change in distribution as a universal kernel on a compact domain is characteristic~\citep[Theorem~5]{gretton2012kerneltwosample}.
We also supply the first 100
observations to competitors requiring data to estimate further parameters (IBDD, WATCH) upfront.

\paragraph{$F_1$-score, precision, and recall.}

We compute the precision, the recall, and the $F_1$-score, which are common to evaluate
change detection algorithms
\citep{li2019scanbstatistics,keriven2020mmdchangedetection,DBLP:journals/corr/abs-2003-06222,DBLP:conf/bigdataconf/FaberCSBJ21}.
Specifically, for a fixed $\Delta_T \in \mathbb N_{>0}$, we proceed as follows. If a change is detected, and there is an actual change point
within the $\Delta_T$ previous time steps, we consider it a true positive (tp).
If a change is detected, and there is no change point within the $\Delta_T$
previous steps, we consider it a false positive (fp). If no change is detected
within $\Delta_T$ steps of a change point, we consider it a false negative (fn).
We count at most one true positive for each actual change point. With these
definitions, the precision is $\mathrm{Prec} = \mathrm{tp} / (\mathrm{tp} +
	\mathrm{fp})$, the recall is $\mathrm{Rec} = \mathrm{tp} / (\mathrm{tp} +
	\mathrm{fn})$, and the $F_1$-score is their harmonic mean $  F_1 = 2 \cdot
	\left(\mathrm{Prec} \cdot \mathrm{Rec} \right)/\left( \mathrm{Prec} +
	\mathrm{Rec}\right)$.  Note that,
while some algorithms allow to infer where in the data a change happens,
including the proposed MMDEW, we only evaluate the time at which they report a change, as all tested approaches allow reporting this value.

\begin{figure*}
	\centering
	\includegraphics[width=\mysize\linewidth]{./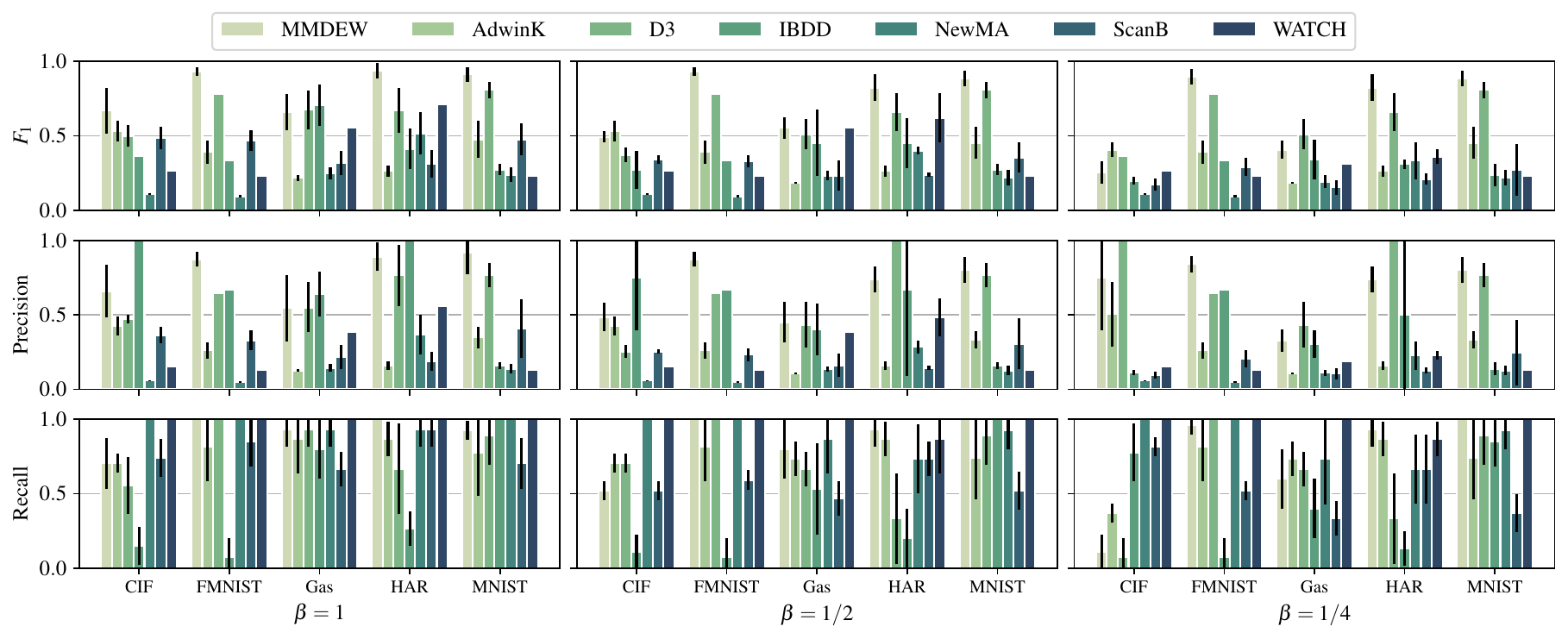}
	\caption{Average $F_1$-score, precision and recall. The bars show the standard deviation over $10$ permutations of the data.}
	\label{fig:results}
\end{figure*}

Figure~\ref{fig:results} shows our results. As $\Delta_T$ is an evaluation-specific parameter, we vary it relative
to the average distance between change points by a factor $\beta > 0$: Given a data
set of length $N$ with $n$ changes, we set $\Delta_T = \beta \cdot N / (n+1)$.
For $\beta = 1$ ($\Delta_T$ is equal to the average number of
steps between change points per respective data set),  MMDEW achieves a
higher $F_1$-score than all competitors on all data sets except for Gas, where it still obtains a competitive result. Throughout, the proposed algorithm obtains a good balance between precision and recall. Other approaches either have very low
precision (for example, less than 20\%), or an inferior recall and precision, down to a
few exceptions.
With a reduced $\beta$, that is, we allow only a shorter detection delay, the performance of all algorithms decreases on average. For $\beta = 1/2$, MMDEW achieves the best $F_1$ score also on four data sets, and, for $\beta = 1/4$ (the most challenging setting) on three of the tested data sets.

We conclude that the proposed method achieves very good results across all these experiments---especially when taking into account the fewer hyperparameters compared to the other approaches that we tested.

\paragraph{Percentage of changes detected and detection delay.}
To obtain a complete picture of the performance of MMDEW, we also report the ``percentage of changes detected'' (PCD), that is, the ratio of the number of reported changes and the number of actual change points, and its MTD on the data streams derived from real-world data. In our context, MTD coincides with the
expected detection delay.

\begin{figure}
	\centering
	\includegraphics[width=\mysize\linewidth]{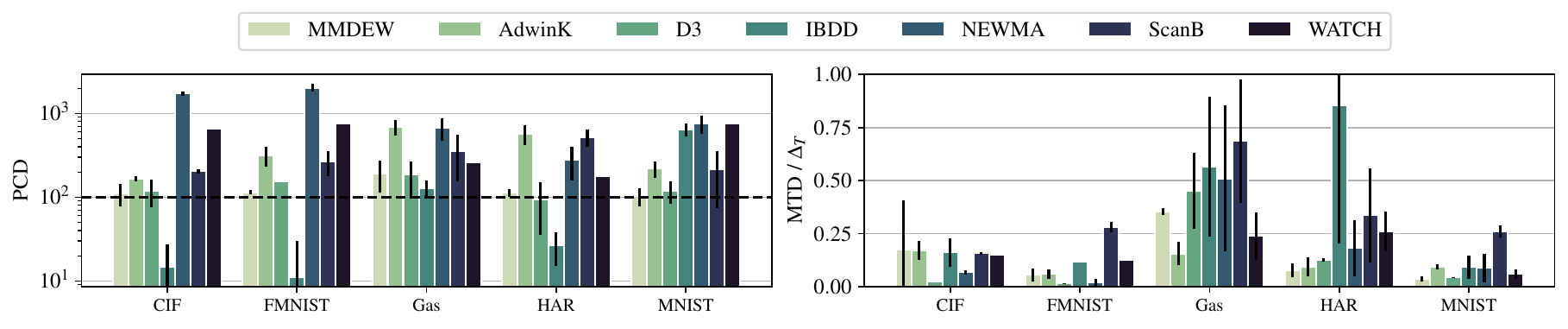}%
	\caption{Average of percentage of changes detected~(PCD) and of mean time to detection~(MTD). The dashed line indicates the optimum for PCD. For MTD lower values are better.}
	\label{fig:percent_changes_results}
\end{figure}

Figure~\ref{fig:percent_changes_results} collects our results. For PCD, results closer to 100\% are
better. Here, MMDEW is on par with the closest competitors and consistently,
that is, across all data sets, detects an approximately correct number of change
points. D3, NEWMA, Scan $B$-statistics, and WATCH detect too many change points in all
cases. This behavior is also reflected in their comparably large recall in Figure~\ref{fig:results}.

For MTD, lower values are better. Here, the classification-based D3 performs best in most of the cases. MMDEW performs a bit worse than D3 but better than
the other algorithms on most data sets, with the Gas data set the major exception.
As the experiments in Figure~\ref{fig:results} show, a
lower $\Delta_T$ tends to lead to a lower $F_1$-score of MMDEW. In other words,
MMDEW tends to detect changes with some delay, but it detects them consistently.

\section{Conclusions}
\label{sec:conclusions}
We introduced a novel change detection algorithm, MMDEW, that builds upon two-sample testing with MMD, which is known to yield powerful tests on many domains. %
To facilitate the efficient computation of MMD, we presented a new data structure, which allows to estimate MMD with polylogarithmic runtime and logarithmic memory complexity. Our experiments on standard benchmark data show that MMDEW obtains the best $F_1$-score on most data sets. At the same time, MMDEW only has two parameters---the level of the statistical test and the choice of kernel. This simplifies the proposed algorithm's application in real-world use cases.

\subsubsection*{Acknowledgments}
This work was supported by the German Research Foundation (DFG) Research Training Group GRK 2153: Energy Status Data---Informatics Methods for its Collection, Analysis and Exploitation and by the Baden-Württemberg Foundation via the Elite Program for Postdoctoral Researchers.

\bibliography{bib/curated,bib/collected}
\bibliographystyle{tmlr}

\appendix
\section{Proofs}

This section contains additional proofs. The proof of Proposition~\ref{prop:hyp-test} is in Section~\ref{sec:proof-hyp-test}. Proposition~\ref{prop:analysis-MMDEW} is proved in Section~\ref{sec:proof-count}.

\subsection{Proof of Proposition~\ref{prop:hyp-test}} \label{sec:proof-hyp-test}

Proposition~\ref{prop:hyp-test} follows from the more general result that we
state below. The statement and proof are similar to
\citet[Theorem~8]{gretton2012kerneltwosample} but do not assume $m=n$. Note that
we recover \citet[Theorem~8]{gretton2012kerneltwosample}  in the case that $m=n$.
We prove Proposition~\ref{prop:hyp-test} afterwards.

\begin{proposition}
	\label{prop:proposition_m_neq_n}
	Let $\P$, $\Q$, $\hat \P_m$, $\hat \Q_n$ be defined as in the main text, assume $0\leq k(x,y) \leq K$ for all $x,y \in \X$, $\P=\Q$, and $t > 0$. Then
	\begin{align*}
		P\left(\MMD\left(\hat\P_m,\hat \Q_n\right) - \left(\frac Km + \frac Kn \right)^\frac{1}{2} \geq t\right) \leq e^{-\frac{t^2mn}{2K(m+n)}}.
	\end{align*}

	\begin{proof}

		First, we bound the difference of $\MMD\left(\hat\P_m,\hat \Q_n\right)$ to its expected value. Changing a single one of either $x_i$ or $y_j$ in this function results in changes of at most $2\sqrt K / m$, and $2\sqrt K / n$, giving
		\begin{displaymath}
			\sum_{i=1}^{n+m}c_i^2 = 4K\frac{n+m}{nm}.
		\end{displaymath}
		We now apply the bounded differences inequality (recalled in Theorem~\ref{thm:bounded-diff}) to obtain
		\begin{align*}
			P\!\left(\MMD\left(\hat\P_m,\hat \Q_n\right)\hspace{-0.05cm}-\hspace{-0.05cm} \mathbb E\MMD\left(\hat\P_m,\hat \Q_n\right)\hspace{-0.05cm} \geq\hspace{-0.05cm} t\right) \hspace{-0.05cm}\leq\hspace{-0.05cm} e^{-\frac{t^2mn}{2K(m+n)}}.
		\end{align*}
		The last step is to bound the expectation, which yields
		\begin{align*}
			\MoveEqLeft\E\MMD\left(\hat\P_m,\hat \Q_n\right) = \mathbb E\Bigg(\frac{1}{m^2}\sum_{i,j=1}^mk(x_i,x_j) +
			\frac{1}{n^2}\sum_{i,j=1}^nk(y_i,y_j)-  \frac {1}{mn}\sum_{i,j=1}^{m,n}k(x_i,y_j) - \frac  {1}{mn}\sum_{j,i=1}^{n,m}k(y_j,x_i)\Bigg)^\frac{1}{2}                                    \\
			 & \leq \Bigg(\frac 1 m \mathbb Ek(X,X) + \frac 1 n \mathbb E k(Y,Y) + \frac 1 m (m-1)\mathbb Ek(X,Y) +\frac 1 n (n-1) \mathbb E k(Y,X)-2\mathbb E k(X,Y)\Bigg)^{\frac{1}{2}}       \\
			 & =\Bigg(\frac 1 m \mathbb Ek(X,X) + \frac 1 n \mathbb Ek(Y,Y)  - \frac 1 m \mathbb Ek(X,Y) -\frac 1 n \mathbb E k(X,Y)\Bigg)^{\frac{1}{2}}                                        \\
			 & =  \Bigg(\frac 1 m \mathbb E\left[k(X,X) - k(X,Y)\right]  + \frac 1 n \mathbb E\left[k(X,X)-k(X,Y)\right]\Bigg)^\frac{1}{2} \leq \left(\frac K m + \frac K n\right)^\frac{1}{2}.
		\end{align*}
		Inserting this into the previous inequality, we obtain the stated result.
	\end{proof}
\end{proposition}

Proposition~\ref{prop:hyp-test} is now a corollary of Proposition \ref{prop:proposition_m_neq_n}, which follows by setting $\alpha =
	e^{-\frac{t^2mn}{2K(m+n)}}$ and solving for $t$ to obtain a test of
level $\alpha$.

\subsection{Proof of Proposition~\ref{prop:analysis-MMDEW}}
\label{sec:proof-count}
To find $n_{\mathrm{XY}_l^l}$, we use our implementation of MMDEW and the On-Line Encyclopedia of Integer Sequences (OEIS)
to discover that $n_{\mathrm{XY}_l^l}$ follows the sequence $1,2,8,24,64,160,\dots$ for $l=0,1,2, \dots$. Thus
\begin{align}
	\label{eq:n-xy-ll}
	n_{\mathrm{XY}_l^l} = 2^{l} l,\quad \text{ for } l > 0
\end{align}
and $n_{\mathrm{XY}_0^0} = 1$~\citep{oeisA036289}.

To find $n_{\mathrm{XX}_l}$, notice that $n_{\mathrm{XX}_l}$ only changes when one merges two windows, which happens for windows of the same size $n_{\mathrm{XX}_{l-1}}$. The algorithm adds to this $2\cdot n_{\mathrm{XY}_{l-1}^{l-1}}$ terms, see \eqref{eq:merge-xx}, and, for $l = 0,1,2,\ldots$, we obtain the recurrence relation
\begin{align*}
	n_{\mathrm{XX}_l} = \begin{cases}
		                    1                                                                  & \quad\text{if $l = 0$,} \\
		                    4                                                                  & \quad\text{if $l = 1$,} \\
		                    2\cdot n_{\mathrm{XX}_{l-1}} + 2 \cdot n_{\mathrm{XY}_{l-1}^{l-1}} & \quad\text{if $l>1$},
	                    \end{cases}
\end{align*}
with $n_{\mathrm{XX}_{-1}} := 0$.
Now write
\begin{align}
	n_{\mathrm{XX}_l} = 2\cdot n_{\mathrm{XX}_{l-1}} + l \cdot 2^l - 2^l+2\cdot[l=0] + 2\cdot[l=1],\hspace{0.5cm} \label{eq:rec-relation}
\end{align}
where the brackets are equal to one if their argument is true and zero otherwise
(using Iverson's convention; \citealt{graham1994concretemathematics}). To find a
closed-form expression for \eqref{eq:rec-relation}, we define the ordinary generating function $A(z) =
	\sum_l a_lz^l$. Now, we multiply \eqref{eq:rec-relation} by $z_l$ and sum on
$l$, to obtain
\begin{align*}
	A(z) = \frac{-8z^3+2z-1}{(2z-1)^3}
\end{align*}
after some algebra, so that
\begin{align*}
	n_{\mathrm{XX}_l} & = [z^l]\frac{-8z^3+2z-1}{(2z-1)^3},
\end{align*}
where $[z^l]$ is the coefficient of $z^l$ in the series expansion of the generating
function $A(z)$. To extract coefficients, we first decompose $A(z)$ as
\begin{align*}
	A(z) = \frac{3}{1-2 z} - \frac{2}{(1-2z)^2} + \frac{1}{(1 - 2z)^3} - 1,
\end{align*}
which allows us to then find the coefficients as
\begin{align*}
	[z^l]\frac{3}{1- 2 z }                                       & \stackrel{(a)}{=} 3\cdot 2^l, &   &
	[z^l]-\frac{2}{(2 z - 1)^2} \stackrel{(b)}{=} -(l+1)2^{l+1}, &                               &
	[z^l]\frac{1}{(1-2z)^3} \stackrel{(c)}{=} (l+1)(l+2)2^{l-1},
\end{align*}
where \citet[Table~335]{graham1994concretemathematics} implies (a), (b) is \eqref{eq:n-xy-ll} shifted, and (c) is \citet{oeisA001788} shifted. We omit the last term as it corresponds to $[z^0]$, which we do not need. Now, adding all terms gives 
\begin{align}
	3 \cdot 2^l -(l+1)2^{l+1} + (l+1)(l+2)2^{l-1} = 2^{l-1}(l^2-l+4), 
\end{align}
concluding the proof.

\section{External results}

To proof Proposition~\ref{prop:hyp-test}, we recall McDiarmid's concentration inequality~\citep{vershynin2018}.

\begin{theorem}[Bounded differences inequality]
	\label{thm:bounded-diff}
	Let $X=\left(X_1,\dots,X_n\right)$ be a random vector with independent components. Let $f : \mathbb R^n \to \mathbb R$ be a measurable function. Assume that the value of $f(x)$ can change by at most $c_i > 0$ under an arbitrary change of a single coordinate of $x = (c_1,\dots,c_n) \in \mathbb R^n$. Then, for any $t > 0$, we have
	\begin{align*}
		P\{f(X) - \mathbb Ef(X) \geq t\} \leq \exp\left(-\frac{2t^2}{\sum_{i=1}^n c_i^2}\right).
	\end{align*}
\end{theorem}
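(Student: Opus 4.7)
The plan is to prove McDiarmid's inequality by the standard martingale method: construct the Doob martingale associated with $f(X)$ and apply Hoeffding's lemma to its increments, then optimize over the Chernoff parameter. Concretely, I would set $\mathcal{F}_k = \sigma(X_1,\dots,X_k)$ with $\mathcal{F}_0$ trivial, and define $Z_k = \mathbb{E}[f(X) \mid \mathcal{F}_k]$ for $k=0,\dots,n$, so that $Z_0 = \mathbb{E} f(X)$, $Z_n = f(X)$, and $(Z_k)$ is a martingale with respect to $(\mathcal{F}_k)$ by the tower property. The quantity to bound is then the tail of the telescoping sum $Z_n - Z_0 = \sum_{k=1}^n D_k$ with $D_k := Z_k - Z_{k-1}$.

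The central step is to control the conditional range of each $D_k$ given $\mathcal{F}_{k-1}$. Writing $g_k(x_1,\dots,x_k) = \mathbb{E}[f(x_1,\dots,x_k, X_{k+1},\dots,X_n)]$, which is well defined by independence of the $X_i$, one has $Z_k = g_k(X_1,\dots,X_k)$ and $Z_{k-1} = \mathbb{E}[g_k(X_1,\dots,X_{k-1},X_k) \mid \mathcal{F}_{k-1}]$. The bounded-differences hypothesis on $f$, combined with independence (so that swapping the $k$-th coordinate inside the conditional expectation over $X_{k+1},\dots,X_n$ preserves the $c_k$-bound), gives that, for fixed $\mathcal{F}_{k-1}$, the random variable $D_k$ takes values in an interval of length at most $c_k$. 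This lets me invoke Hoeffding's lemma in its conditional form to conclude
\begin{align*}
\mathbb{E}\bigl[e^{\lambda D_k} \mid \mathcal{F}_{k-1}\bigr] \le \exp\!\left(\tfrac{\lambda^2 c_k^2}{8}\right) \qquad (\lambda \in \mathbb{R}).
\end{align*}

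With the increment MGF under control, I would finish by a Chernoff argument. Fix $\lambda > 0$ and apply Markov's inequality to $e^{\lambda(Z_n - Z_0)}$, then peel off one increment at a time using the tower property:
\begin{align*}
\mathbb{E}\bigl[e^{\lambda \sum_{k=1}^{n} D_k}\bigr] = \mathbb{E}\bigl[e^{\lambda \sum_{k=1}^{n-1} D_k} \mathbb{E}[e^{\lambda D_n} \mid \mathcal{F}_{n-1}]\bigr] \le e^{\lambda^2 c_n^2/8}\, \mathbb{E}\bigl[e^{\lambda \sum_{k=1}^{n-1} D_k}\bigr],
\end{align*}
iterating to obtain $\mathbb{E}[e^{\lambda(Z_n - Z_0)}] \le \exp(\lambda^2 \sum_{k=1}^n c_k^2 / 8)$. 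Then $P\{f(X) - \mathbb{E} f(X) \ge t\} \le \exp(-\lambda t + \lambda^2 \sum_k c_k^2/8)$, and minimizing the exponent over $\lambda$ at $\lambda^\star = 4t/\sum_k c_k^2$ yields exactly the stated bound $\exp(-2t^2/\sum_{k=1}^n c_k^2)$.

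The main obstacle is the conditional bounded-range claim for $D_k$: the clean way to see it is to note that for any two possible values $a, a'$ of $X_k$ the difference $g_k(X_1,\dots,X_{k-1},a) - g_k(X_1,\dots,X_{k-1},a')$ is an average (over the independent $X_{k+1},\dots,X_n$) of quantities each bounded by $c_k$, so the supremum of $D_k$ minus its infimum on the fiber determined by $\mathcal{F}_{k-1}$ is at most $c_k$. Everything else — Hoeffding's lemma, the telescoping MGF bound, and the Chernoff optimization — is routine. As the statement is a classical result, I would likely just cite the standard reference rather than reproduce the full computation.
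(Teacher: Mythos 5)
The paper does not prove this statement at all: it is recalled verbatim as a known preliminary (McDiarmid's bounded differences inequality) with a pointer to a standard reference, exactly as you suggest doing at the end of your proposal. Your sketch --- Doob martingale $Z_k=\mathbb{E}[f(X)\mid\mathcal{F}_k]$, conditional range bound $c_k$ on the increments via independence, conditional Hoeffding's lemma, telescoping the moment generating function, and optimizing the Chernoff parameter at $\lambda^{\star}=4t/\sum_k c_k^2$ --- is the standard and correct proof of that result, so it is consistent with what the paper relies on and there is no gap to report.
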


\section{Hyperparameter optimization settings} \label{app:hyperparameters}

We collect the hyperparameter choices that we tested in our experiments on real-world classification data (Section~\ref{sec:real-world-data}) in Table~\ref{tab:hyperparameters} and
refer to the respective original publications for additional
information on the parameter settings.

\begin{table}[h]
	\centering
	\caption{Values chosen for the parameter optimization.}
	\label{tab:hyperparameters}
	\begin{tabular}{lcl}
		\toprule
		Algorithm & Parameters                      & Parameter values                                                                          \\
		\midrule
		MMDEW     & $\alpha$                        & $\alpha \in \{0.001,0.01,0.1, 0.2\}$                                                      \\
		ADWINK    & $\delta$, $k$                   & $\delta \in \{0.05, 0.1, 0.2, 0.9, 0.99\}$, $k\in\{0.01,0.02,0.05,0.1,0.2\}$              \\
		D3        & $\omega, \rho, \tau, \text{d}$  & $\omega \in \{100,200,500\}$,
		$\rho\in\{0.1,0.3,0.5\}$, $\tau\in\{0.7,0.8,0.9\}$, $\text{d}=1 $                                                                       \\
		IBDD      & $m, w$                          & $m \in \{10,20,50,100\}$, $w\in\{20, 100,200,300\}$                                       \\
		NEWMA     & $\omega,\alpha$                 & $\omega \in \{20,50,100\}$,  $\alpha \in \{0.01,0.02,0.05,0.1\}$                          \\
		Scan $B$  & $B, \omega,\alpha$              & $B\in\{2,3\}$, $\omega \in \{100,200,300\}$,  $\alpha \in \{0.01, 0.05\}$                 \\
		WATCH     & $\epsilon, \kappa, \mu, \omega$ & $\epsilon\in\{1, 2,3\}$, $\kappa \in \{25,50,100\}$, $\mu\in\{10,20,50,100, 1000,2000\}$, \\
		          &                                 & $\omega\in\{100, 250, 500, 1000\}$                                                        \\
		\bottomrule
	\end{tabular}

\end{table}

\section{Additional experiments}

In this section, we collect additional experiments on synthetic data. In Appendix~\ref{appendix:edd-mtd-comparison}, we align the ARLs of the kernel-based approaches and compare their respective EDD/MTD. In Appendix~\ref{appendix:univariate-comparison}, we show that our MMD-based approach detects changes in the covariance structure of multivariate data, which aggregated univariate approaches cannot detect reliably.

\subsection{EDD/MTD of kernel-based approaches on synthetic data}
\label{appendix:edd-mtd-comparison}

The following experiments compare the EDD of the kernel-based online change detection approaches considered in the main text and the related work for a fixed ARL on toy data, extending the experiments of \citet[Figure~4]{wei22online}. We note that all approaches considered in this section compute, for each new observation, a test statistic and compare the statistic to a threshold. If the statistic exceeds the threshold, a change is flagged. In the case of the proposed algorithm, multiple test statistics (one for each possible split) are computed. To allow for a comparison, we select the maximum MMD value across all splits, that is, for MMD, we consider the stopping rule
\begin{align}
	T'' = \inf\left\{t : \max_{s=1,\ldots,l} \MMD\left(\bigcup_{i=s+1}^l \mathrm{X}_i, \bigcup_{i=0}^{s} \mathrm{X}_i\right) \ge b\right\}, \label{eq:mmd-efficient-stopping-time-b}
\end{align}
with a fixed $b >0$, instead of \eqref{eq:mmd-efficient-stopping-time} as done in Section~\ref{sec:experiments}. Similarly, for MMDEW, we consider \eqref{eq:mmd-efficient-stopping-time-b} but with subsampling applied to the $X_i$-s.
The experimental setup is as follows.

To achieve a fixed target ARL $\E_{H_0}T$ for a given stopping time $T$, we run $25$ Monte Carlo simulations on $150{,}000$ samples from $\P = \mathcal N(\bm 0,\mathbf I_d)$ with $d=20$ and select $b$ as the $1-1/(\text{target ARL})$-quantile of the collected test statistics as threshold. For online kernel CUSUM, we set its parameters $B_{\text{max}}=50$ and $N=15$, matching the settings of \citet[Figure~4]{wei22online}. Similarly, for Scan $B$-statistics and NEWMA, we set $B_0=50$; the remaining parameters of NEWMA then follow from the heuristics detailed by the authors \citep{keriven2020mmdchangedetection}.

For approximating the EDD of MMDEW and NEWMA for a threshold $b$,
we draw $64$ and $400$ samples from $\P$, respectively, before sampling from $\Q$. Online kernel CUSUM and Scan $B$-statistics each receive $1{,}000$ samples from $\P$ upfront, for computing the variance estimate they require and to use as a reference sample. All approaches use the Gaussian kernel with the bandwidth set by the median heuristic \citep{garreau18large}.\footnote{Note that NEWMA uses random Fourier features \citet{DBLP:conf/nips/RahimiR07} to approximate the kernel.}

\begin{figure}
	\centering
	\includegraphics[width=\mysize\textwidth]{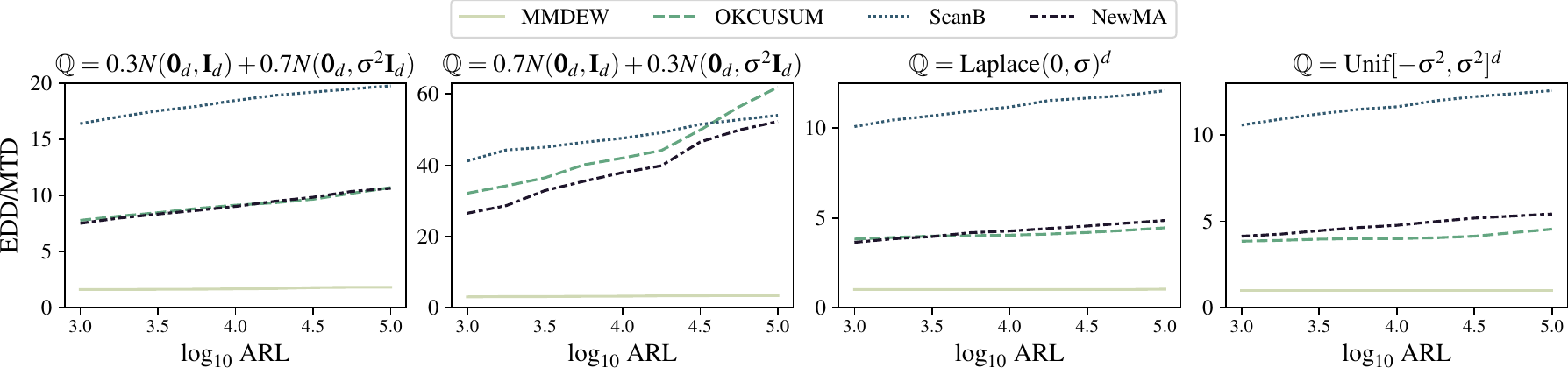}
	\caption{EDD/MTD of kernel-based change detectors with a pre-change distribution of $\P = \mathcal N(\bm 0,\mathbf I_d)$, $d=20$, and the indicated post-change distribution ($\sigma = 2$).}
	\label{fig:comp-arl-edd}
\end{figure}

Figure~\ref{fig:comp-arl-edd} collects our results, with each subfigure corresponding to a different post-change distribution, of which we sample and process $500$ elements to find the first time %
the test statistics exceeds the threshold. We consider the mean result over $100$ repetitions. The results show that OKCUSUM and NEWMA perform similarly across all experiments, with Scan $B$-statistics performing generally worse in three of the cases. MMDEW achieves the lowest EDD throughout. Specifically, on the mixed distribution $\Q = \gamma\mathcal N(0,\mathbf I_d) + (1-\gamma) \mathcal N(0,\sigma^2\mathbf I_d)$ ($\gamma=0.3$), the EDD of the proposed method is between $1.62$ and $1.82$. In the more challenging setting of $\gamma=0.7$, the EDD of MMDEW is between $3.06$ and $3.46$. Here, OKCUSUM performs second-best, with an EDD of $32.15$ for a target ARL of $1{,}000$ and $61.92$ for a target ARL of $100{,}000$. On the Laplace and Uniform distributions, the proposed method improves upon the results of OKCUSUM and NEWMA as well, albeit by a smaller margin.

While these experiments show that the proposed method improves upon the state-of-the-art, we note that the experiments require obtaining samples from $\P$, which is rarely feasible in practice. In this case, we recommend setting the threshold of MMDEW by the McDiarmid-based bound (Proposition~\ref{prop:hyp-test}), as we do in the experiments in the main text (Section~\ref{sec:experiments}).

\subsection{Comparison with univariate approaches}
\label{appendix:univariate-comparison}

In this section, we compare the test statistics of the proposed MMDEW, MMD, the Cramer-von-Mises change point model (CvM CPM; \citealt{ross12controlcharts,ross15cpmpackage}), and the recent non-parametric Focus \citep{romano23changedetection} approach on $20$-dimensional multivariate normal data with a mean and correlation shift, respectively. CvM CPM and Focus handle univariate data only. To run each on multivariate data, we run one instance per dimension and consider the means of their test statistics. For MMD and MMDEW, our settings are the same as detailed in Appendix~\ref{appendix:edd-mtd-comparison}. We set the pre-change distribution to $\P = \mathcal N(\bm 0_d,\mathbf I_d)$; the respective post-change distributions $\Q$ are indicated in Figure~\ref{fig:univariate-data}. For CPM, which updates all previous test statistics upon observing a new sample, we report the test statistics computed after processing $500$ samples from the pre-change distribution and $100$ samples from the respective post-change distribution; for all other approaches, we report the test statistic computed upon observing each sample.

\begin{figure}[h]
	\centering
	\includegraphics[width=\mysize\textwidth]{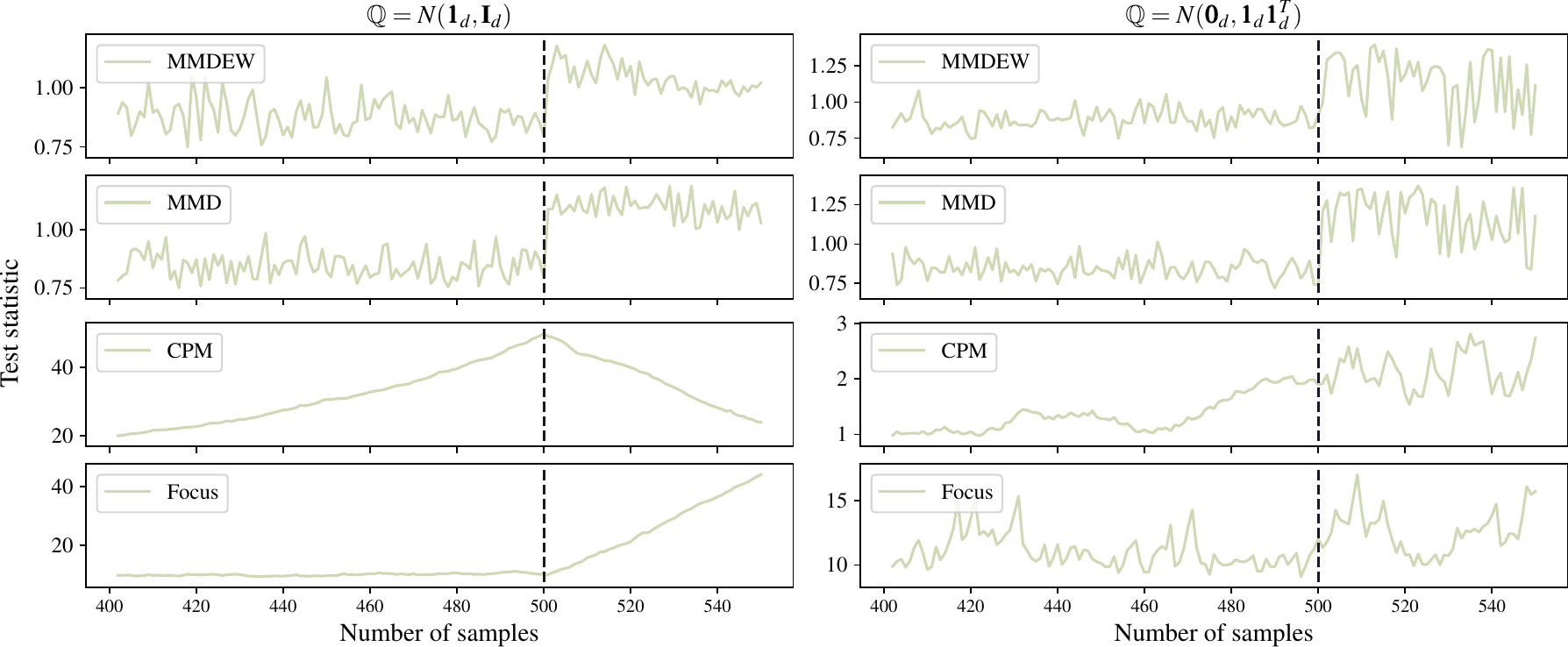}
	\caption{Maximum values of the respective test-statistics ($20$ repetitions, $d=20$). A change (indicated by a dashed line) occurs after $500$ samples, from $\mathcal N(\bm 0_d,\textbf I_d)$ to the distribution indicated on top of the columns, respectively. For the univariate approaches (CPM, Focus), we run one instance per dimension and consider the mean. }
	\label{fig:univariate-data}
\end{figure}

Our results are in Figure~\ref{fig:univariate-data}. A change in either the distribution mean (l.h.s.) or the correlation (r.h.s.) lead to an increase of the test statistic of MMDEW and MMD, respectively. Hence, these approaches allow detecting such changes. CPM and Focus correctly identify the change in mean, which is reflected in the univariate marginals they consider. CPM correctly identifies the change point, that is, the maximum value of the test statistic is at $500$.
When regarding the change in the correlation (the marginals the univariate approaches consider do not change), Focus' test statistic does not reflect the change point. Surprisingly, for CPM, the change in the correlation structure leads to a change in the test-statistic---but the change is identified incorrectly, with the maximum of the test statistic occurring after approximately $530$ samples. We conclude that using MMD or the proposed MMDEW is preferable to aggregating univariate change detectors when processing multivariate data, when the changes are not reflected in the marginals.

\vfill
\end{document}